\newtheorem{theorem}{Theorem}
\newtheorem{remark}{Remark}
\newtheorem{lemma}{Lemma}
\newtheorem{definition}{Definition}
\newcommand{\pbiased}{p\text{-biased}}
\newcommand{\pmone}{\{\pm 1\}}
\newcommand{\phis}{\phi_S}
\newcommand{\fhats}{\hat{f}_S}
\newcommand{\normBp}[1]{\|#1\|_{\mathcal{B}_p}}
\newcommand{\innerBp}[2]{\langle #1, #2 \rangle_{\mathcal{B}_p}}
\newcommand{\sV}{\mathsf{V}}
\newcommand{\sP}{\mathsf{P}}
\newcommand{\secref}[1]{Section~\ref{#1}}
\newcommand{\thmref}[1]{Theorem~\ref{#1}}
\newcommand{\defref}[1]{Definition~\ref{#1}}
\title{Efficiently Verifiable Proofs of Data Attribution}
\author{%
  Ari Karchmer\thanks{Morgan Stanley Machine Learning Research and Harvard Business School; \texttt{akarchmer0@gmail.com}} \\
  \and
  \textbf{Martin Pawelczyk}\thanks{Harvard Business School; \texttt{martin.pawelczyk.1@gmail.com}} \\
  \and
  \textbf{Seth Neel}\thanks{Google Research and Harvard Business School; \texttt{sethneel@google.com}} \\
}
\begin{document}

\maketitle

\begin{abstract}
Data attribution methods aim to answer useful counterfactual questions like "what would a ML model's prediction be if it were trained on a different dataset?'' However, estimation of data attribution models through techniques like empirical influence or ``datamodeling'' remains very computationally expensive.
This causes a critical trust issue: if only a few computationally rich parties can obtain data attributions, how can resource-constrained parties trust that the provided attributions are indeed ``good,'' especially when they are used for important downstream applications (e.g., data pricing)?
In this paper, we address this trust issue by proposing an interactive verification paradigm for data attribution. An untrusted and computationally powerful \textit{Prover} learns data attributions, and then engages in an \textit{interactive proof} with a resource-constrained \textit{Verifier}. Our main result is a protocol that provides formal \textit{completeness}, \textit{soundness}, and \textit{efficiency} guarantees in the sense of Probably-Approximately-Correct (PAC) verification \citep{goldwasser2021interactive}. Specifically, if both Prover and Verifier follow the protocol, the Verifier accepts data attributions that are $\varepsilon$-close to the optimal data attributions (in terms of the Mean Squared Error) with probability $1-\delta$. Conversely, if the Prover arbitrarily deviates from the protocol, even with \textit{infinite} compute, then this is \textit{detected} (or it still yields data attributions to the Verifier) except with probability $\delta$. Importantly, our protocol ensures the Verifier's workload, measured by the number of independent model retrainings it must perform, scales only as $O(1/\varepsilon^2)$; i.e.,  \textit{independently} of the dataset size.
At a technical level, our results apply to efficiently verifying any linear function over the boolean hypercube computed by the Prover, making them broadly applicable to various attribution tasks.
\end{abstract}

\section{Introduction}
The attempt to understand and explain the behavior of complex machine learning systems has given rise to the field of \textit{attribution}. Broadly, attribution methods seek to trace model outputs or behaviors back to their origins. This takes several forms. One prominent form is \textit{training data attribution}, which aims to quantify the influence of individual training examples on model predictions, facilitating interpretability, debugging, and data valuation (e.g., \citealp{koh2017understanding, ghorbani2019datashapleyequitablevaluation,ilyas2022datamodels}). Complementary to this, \textit{component attribution} \citep{shah2024decomposing} focuses on decomposing a model's prediction in terms of its internal components, such as convolutional filters or attention heads, to understand how they combine to shape model behavior.

While many applications of attribution may be of interest to the model developers, others are inherently of interest to third parties who may not have access to the internal details of how the attributions were computed. 
For instance, consider a framework where entities that supply data for training are compensated in proportion to the degree their training data influences model outputs, an idea which has been extensively discussed \citep{ghorbani2019datashapleyequitablevaluation,marketprotocols,jia2023efficientdatavaluationbased,choe2024dataworthgptllmscale}. In this setting, a user $i$ may not only be concerned that the attributions for the dataset as a whole are accurate---meaning they have low predictive error in absolute terms---but that if the ground truth attribution for user $j$ is lower than for user $i$, the received attributions also satisfy this property. As another example, suppose a doctor is using a model to provide diagnoses, and wants to understand what training data the model used to reach its conclusions, in order to sanity check them.

In each of these settings, rather than simply trusting that attributions are computed correctly, third parties may want to \emph{verify} that the attributions are ``right,'' before making important downstream decisions. The challenge arises because of a computational disparity---third parties using these attributions typically have \emph{far, far fewer} computational resources than the model developer that is providing them, whether they are an individual or even another small lab in academia or industry. Thus, given the large computational expense associated with estimating ground truth attributions, we have the following motivating question: 
\begin{quote}
\centering
\emph{Is it possible to design a protocol by which a computationally-limited third party can ``verify'' the correctness of attributions?}
\end{quote}

\paragraph{Our Contribution, in a Nutshell.} 
In this paper, we demonstrate an interactive two-message protocol where the third party's computational cost, which is measured by the number of full model retrainings it must conduct, is independent of the attribution set size $N$ (e.g., in data attribution $N$ is dataset size, while in component attribution it is the number of components). That is, this number of required retrainings does not grow as a function of $N$. Our protocol comes with strong guarantees of correctness for the third party, even if the model developer deviates from the agreed protocol arbitrarily, and with \textit{infinite} compute. To make this precise, we must specify (i) the form of attributions we study, (ii) how we measure computational cost, (iii) what our accuracy metric is, and finally (iv) what guarantees of correctness we can (\textit{and should}) provide for our protocol. 

While we will formalize each of these in section \ref{sec:prelims}, we discuss each of them here in order to clearly state our contributions. Additionally, while our work applies to data attribution methods in general, we will proceed with predictive data attribution in mind, for purposes of exposition.

\paragraph{Predictive Training Data Attribution.} 
In predictive training data attribution (TDA), the goal is to answer counterfactual questions: ``What would have happened to a model's behavior if we had trained it on a different subset of the data?''  To formalize this, we consider a \textit{model output function}, $f: \{\pm 1\}^N \to \real$. The domain of $f$ are subsets $x$ of a fixed training dataset $S$ of size $N$ (where $x \in \{\pm 1\}^N$ and $x_i=1$ indicates inclusion of the $i^{th}$ data point from  $S$ and $x_i=-1$ indicates exclusion). The model output function $f(x)$ measures the (expected) model behavior if it were trained on $S$, and is evaluated by retraining a model $\theta \sim \mathcal{T}(x)$ via a training algorithm $\mathcal{T}$, and then computing some function of $\theta$, often the prediction $\theta(z)$ on an input point $z$ \citep{ilyas2022datamodels}, or the test error of $\theta$ on some subgroup of the data \citep{jain2024datadebiasingdatamodelsd3m}. 

The objective of TDA is then to construct a \textit{predictive model}, often a linear function $g(x) = \langle \mathbf{a}, x \rangle$, where $\mathbf{a}$ are the \textit{attribution scores}, in order to predict the \textit{counterfactual effect} of training on a different subset of the data. The quality of this predictive model can be measured by its Mean Squared Error (MSE) with respect to the true model output function $f$ and a binomial product distribution $\mathcal{B}_p$ over the subsets:
\begin{equation*}
\mathrm{mse}^{(p)}(f, \langle \mathbf{a}, \cdot \rangle) \triangleq \Ex{x \sim \cB_p}{(f(x) - \langle \mathbf{a}, x \rangle)^2}.
\end{equation*}
Achieving a low MSE indicates that the attribution scores $\mathbf{a}$ accurately predict how different subsets of data affect the model's behavior.\footnote{Note, $f$ is technically a randomized function, but we will assume a fixed random seed, and that $f$ becomes deterministic once the random seed is fixed.}

\begin{remark}
While we will focus on the well-studied area of TDA throughout, the recently proposed ``component attribution,'' \citep{shah2024decomposing} that aims to determine which internal components of the model are primarily responsible for specific predictions, also fits neatly into our framework. In this case, $S$ defined above corresponds to the set of total model components, sampling $x \sim \cB_p$ corresponds to sampling a subset of components, and the model output function $f$ consists of computing a forward pass on a test input $z$ where activations from components $\not \in x$ are set to a constant. The methods in this paper can also be used to verify the accuracy of model component attributions. 
\end{remark}

\paragraph{Specific Methods for Training Data Attribution.}
There are two major classes of algorithms for TDA: those based on retraining, and gradient-based methods like influence functions or representer point methods that leverage approximations to the model training process or architecture. In re-training based methods, first samples $x_i \subset S$ are drawn (typically from a product distribution), and the output function $f(x_i)$ is computed, which involves training a model $\theta(x_i)$ and then computing the relevant statistic from $\theta(x_i)$. Different retraining-based attribution methods generally correspond to different techniques for estimating $\mathbf{a}$ from samples $\{(x_i, f(x_i)\}_{i=1}^{M}$ computed by retraining. In Datamodels \citep{ilyas2022datamodels}, $\mathbf{a}$ is computed via a Lasso regression to encourage sparsity in $\mathbf{a}$, where the model output function computes the difference in correct class logit for a specific test input $z$. In Empirical Influence \citep{feldman2020neural}, the $j^{th}$ coefficient $
\mathbf{a}_j$ is estimated as the average of the sampled outcomes where the $j^{th}$ datapoint was included: $M^{-1}\sum_{x_i \in M_j}f(x_i)$, where $M_j = \{x_i: (x_i)_j = 1\}$. The clever work of \citet{saunshi2022understanding} proves that these two methods are essentially theoretically equivalent, though performance can vary in practice, in part due to optimizations such as applying the LASSO algorithm. 

Most gradient-based TDA methods are based on approximations to the continuous influence function \citep{law1986robust, koh2017understanding, park2023trak, grosse2023studying, guu2023simfluencemodelinginfluenceindividual}. While these methods are more computationally efficient than exact retraining, the assumptions underlying influence functions (e.g., convexity, training to convergence) often do not hold in deep learning, potentially undermining their suitability for data attribution \citep{bae2022if}. Yet another class of TDA methods aim to approximate the model with a simpler model class, where the attributions can be computed analytically \citep{park2023trak, yeh2018representerpointselectionexplaining}. Prior work has found that retraining methods can be more accurate than those based on influence functions or model approximations, in settings where they are both tractable to compute \citep{ilyas2022datamodels}, but so far these retraining based methods are unable to scale to large models \citep{grosse2023studying}.

\paragraph{Computational Cost.}
While highly accurate retraining-based predictive data attribution can prove invaluable for tasks like dataset selection \citep{engstrom2024dsdm} and sensitivity analysis \citep{broderick2020automatic,pawelczyk2022trade}, they come with a large computational cost. Computing a fresh sample $f(x_i)$ corresponds to \emph{retraining an entirely new model from scratch}, and the number of samples required to obtain accurate Datamodels (attributions) is large: the work of \citet{ilyas2022datamodels} trains \textit{three million} ResNet‑9 networks on random subsets of the CIFAR‑10 dataset to construct their attribution. The recent work by \citet{georgiev2024attributetodeletemachineunlearningdatamodel} uses Datamodels for the less stringent task of machine unlearning, and still requires training $> 10,000$ models. Methods based on the influence function avoid retraining but still require computations like inverse-Hessian-vector products, which can be intractable for very large models. Moreover, even methods based on influence functions may require ensembling over multiple training runs to achieve reasonable accuracy \citep{park2023trak}.

\paragraph{Trust Issues: Naive Verification Fails.} 

The computational barriers described above put TDA, especially retraining-based methods like Datamodels or Empirical Influence, beyond the reach of individual practitioners or even moderately resourced academic labs. This effectively centralizes the task within a few industrial or institutional powerhouses.

Centralization causes trust issues. If only a few parties can effectively perform attribution, then, given a candidate attribution $\mathbf{a}$ computed by the computationally powerful group, how can a third party efficiently establish the accuracy of $\mathbf{a}$? In keeping with the rich body of work relating to interactive proofs, going forward we will refer to the computationally powerful party as the \textit{Prover}, and the resource-constrained third party as the \textit{Verifier}.

A naive idea that seemingly fits into our setting, is for the Verifier to try to check the MSE of any proposed attributions $\mathbf{a}$.

\begin{enumerate}[label=\(\bullet\), leftmargin=8mm,nosep]
\itemsep0.15cm
\item The Verifier randomly samples a collection of subsets $T \subset \pmone^N$ of size $m$, obtains $f(x) \; \forall x \in T$, and uses that data to estimate $\mathrm{mse}^{(p)}(f, \langle \mathbf{a}, \cdot \rangle) \approx \frac{1}{m}\sum_{x_i \in T}(\langle \mathbf{a}, x_i \rangle - f(x_i))^2$. 
\end{enumerate} 
When $m$ is proportional to $1/\varepsilon^2$, the estimate is within an additive $\varepsilon$ error from the true MSE with high probability. Hence, the Verifier \textit{can} check that the Prover has given him attributions which have MSE at most $\alpha+\varepsilon$, for some pre-determined $\alpha$. Most importantly, the Verifier can do this using only $O(1/\varepsilon^2)$ retraining procedures on the model---which is \textit{independent} of the data set size $N$!

However, this is \textit{not} actually satisfactory. 
To see why, consider again the setting of data valuation. For concreteness, imagine each data point is contributed by a user, under the agreement they will be compensated in proportion to the attribution scores for the data they contribute. Now, suppose the Prover proposes candidate attributions $\mathbf{a}$ such that $\mathrm{mse}^{(p)}(f, \langle \mathbf{a}, \cdot \rangle) = 0.22$. Then, when the Verifier checks the MSE, they will obtain an estimate $\alpha' \in [0.22 - \varepsilon, 0.22 + \varepsilon]$ with high probability. 
But, what if there exists $\mathbf{a'}$ such that $\mathrm{mse}^{(p)}(f, \langle \mathbf{a'}, \cdot \rangle) = 0.022$?

Indeed, in order to reduce total costs, a \textit{malicious} Prover might cheat as follows:
\begin{enumerate}[label=\(\bullet\), leftmargin=8mm,nosep]
\itemsep0.15cm
\item Compute $\mathbf{a}$ by brute-force Empirical Influence.\footnote{It is strongest to consider a malicious Prover to be computationally \textit{unbounded}---as is customary in the theory of interactive proofs \cite{goldwasser2019knowledge, goldwasser2021interactive} or statistical zero knowledge \citep{vadhan1999study}, and others.}
\item Use $\mathbf{a}' = \mathbf{a} \cdot \frac{1}{2}$ (scalar multiplication).
\end{enumerate}
Recall that in our simple data valuation setting, the Prover will pay out data sellers proportional to their attribution. Thus, by dividing the true attributions by 2, the Prover will save 50\% money (at the necessary cost of increase to MSE).

As in the naive protocol, only checking the MSE fails to shed light on whether such cheating has been carried out by the Prover (without loss of generality on the exact cheating strategy). To emphasize the point further, consider the case that the Prover wants to advantage a subset of data contributors $A$ over all other contributors, so they receive more payment. In order to do so, they modify $\mathbf{a}$ by increasing coordinates where $i \in A$ by some value $\beta > 0$ to produce a new attribution vector $a'$. If $\beta$ is reasonably small, the modified $a'$ might also have low MSE, and so a Verifier that merely checks the MSE is below a certain threshold would accept these attributions. 

It is not a priori clear whether or not the Verifier has any way of defending itself against this form of cheating Prover, without having to compute the attributions itself. That is, without having to do the work of the Prover.

\paragraph{A Better Notion of Verifiability.} 
As we have hopefully demonstrated, we need a more meaningful notion of verifiability, that goes beyond simply checking the MSE of the proposed attribution.
To this end, we suggest estimating \textit{sub-optimality}. Put simply, the Verifier should ensure the attribution is ``$\varepsilon$-close'' to optimal. This can be written as verifying the fact that
\begin{equation}\label{eq:error_gap}
\mathrm{err}(\mathbf{a}', \Phi(S)) = \mathrm{mse}^{(p)}(f, \langle \mathbf{a}', \cdot\rangle) - \mathrm{mse}^{(p)}(f, \langle \Phi(S), \cdot \rangle) \le \varepsilon.
\end{equation}
Here, $\Phi(S)$ denotes the \textit{optimal} attribution vector for the set $S$, with respect to MSE. That is,
\begin{equation}
\Phi(S) \triangleq \arg\min_\mathbf{a} \mathrm{mse}^{(p)}(f, \langle \mathbf{a}, \cdot\rangle). 
\end{equation}
And so, in other words, verifying sub-optimality means checking if the error gap (equation \ref{eq:error_gap}) between the proposed attributions, and the optimal attributions, is small.

All in all, checking low sub-optimality tells the Verifier that for this specific setting, the provided attributions are nearly optimal, and therefore the Prover has acted in good faith. Thus, we propose to aim for the following (still informal) guarantees.

\paragraph{Informal verification Guarantees.}
When designing a verification protocol we will obtain:
\begin{enumerate}[label=\(\bullet\), leftmargin=8mm,nosep]
\itemsep0.15cm
\item \textbf{Completeness.} If both Prover and Verifier follow the protocol, the Verifier obtains attribution scores that are \textbf{approximately optimal} with respect to predictive MSE, with high probability.
\item \textbf{Soundness.} If the Prover deviates from the protocol in any way, then, with high probability, the Verifier either outputs ``abort'' or still obtains attribution scores that are \textbf{approximately optimal} with respect to predictive MSE.
\item \textbf{Efficiency.} The Verifier's workload scales independently of the dataset size $N$.
\end{enumerate}

\begin{remark}
In practice, even an honest, powerful Prover might only compute an \textit{estimate} $\hat{\mathbf{a}}^{\star}$ of $\Phi(S)$ due to computational constraints (e.g., using an influence function). The soundness guarantee still holds relative to the true $\Phi(S)$. However, the completeness guarantee might be affected if the honest Prover's estimate $\hat{\mathbf{a}}^{\star}$ is itself far from $\Phi(S)$. If $\mathrm{err}(\hat{\mathbf{a}}^{\star}, \Phi(S))$ is inherently large due to the Prover's own estimation limitations, the Verifier might reject even an honest Prover if $\varepsilon$ is set too small. Therefore, the choice of $\varepsilon$ in practice should reflect not only the Verifier's desired precision but also potentially incorporate a tolerance for the best achievable estimation error by an efficient (though powerful) honest Prover. 
\end{remark}

\subsection{Our Contributions}
\textbf{Conceptual contributions.} Conceptually, this work introduces the idea, motivation, and goal of efficient interactive verification of data attribution.

As we will see in the next section, our formalization of this goal is done via direct connection to the interactive PAC-verification framework (\defref{def:pac-verif}) of \cite{goldwasser2021interactive}. 
As we have discussed, a key challenge in verifying a proposed attribution vector $\mathbf{a}'$ is comparing its predictive quality against the \textit{optimal} linear predictor $\Phi(S)$ without actually computing $\Phi(S)$. The PAC-Verification paradigm turns out to be well-equipped to handle this challenge.

\textbf{Technical contributions.} 
As for technical contributions, we demonstrate two efficient protocols for the task of efficiently verifying optimality of proposed attributions.

For the first protocol, it turns out that recent work by \citet{saunshi2022understanding} actually provides a ``residual estimation'' algorithm for exactly this task. More specifically, their algorithm estimates the optimal residual (that is, $\mathrm{mse}^{(p)}(f, \langle \Phi(S), \cdot \rangle)$) using only $O(1/\varepsilon^3)$ samples of the function $f$ (\thmref{thm:residual_estimation_guarantee}), \textit{without} needing to compute $\Phi(S)$ itself. This naturally suggests a potential \textit{non-interactive} verification protocol where:
\begin{enumerate}[label=\(\bullet\), leftmargin=8mm,nosep]
\itemsep0.15cm
    \item The Prover would compute Empirical Influence attributions $\mathbf{a}'$ and send them to the Verifier.
    \item The Verifier could independently estimate the optimal $\mathrm{mse}^{(p)}(f, \langle \Phi(S), \cdot \rangle)$ (within an additive factor of $\varepsilon$) using the algorithm of \citet{saunshi2022understanding}, and also \textit{independently} estimate $\mathrm{mse}^{(p)}(f, \langle \mathbf{a}', \cdot\rangle)$ (within an additive factor of $\varepsilon$) using $O(1/\varepsilon^2)$ samples. The Verifier would then accept if the two estimates were $\varepsilon$-close.
\end{enumerate} 

The complexity of this non-interactive approach would be dominated by the residual estimation step, resulting in a Verifier cost of $O(1/\varepsilon^3)$ (derived from the sample complexity guarantee of \citet{saunshi2022understanding}). We consider this non-interactive protocol in detail in section \ref{sec:baseline_protocol}. 

\textbf{Main Technical Contribution.} 
Our main technical contribution is to build out the above non-interactive protocol into an \textit{interactive} protocol. Our interaction strategy serves to reduce the Verifier's overall complexity to $O(1/\varepsilon^2)$, and is implemented by a ``spot-checking'' mechanism. The function of the ``spot-checking'' mechanism is to further move some of the Verifier's work required for residual estimation, onto the prover. This mechanism necessarily requires an interaction, though we only use 2 messages (first the Verifier speaks, and then the Prover responds). To prove the intended qualities of the spot-checking mechanism, we do a non-black box analysis of the proof of the residual estimation algorithm of \cite{saunshi2022understanding}, in order to demonstrate a degree of adversarial robustness in the procedure.

We outline our protocol in \secref{sec:protocol_outline}.
The interactive protocol satisfies the completeness and soundness guarantees that we have discussed. With respect to efficiency, we improve upon the non-interactive protocol by obtaining a Verifier workload that scales as $O(1/\varepsilon^2)$.

\section{Formal PAC-verification Framework for Attribution and Main Theorem}
\label{sec:prelims}
To this point, we have not formalized what we mean by a Prover-Verifier protocol for data attribution. Let us begin with formalizing the interaction model.

\textbf{Communication.} As part of the interaction, we assume an asynchronous, reliable channel between the parties.  
An \textit{interaction} consists of a finite sequence of messages
$w_1,w_2,\dots ,w_t \ \in \ \{0,1\}^{*}$,
sent alternately between the \textbf{Verifier} $\mathsf{V}$ and the \textbf{Prover} $\mathsf{P}$. 
Messages are unrestricted bit-strings and may encode, for example, hashes, sketches, model weights, or random seeds.


\textbf{Shared resources.}
Both the Prover and the Verifier will agree on certain information regarding the objective of the protocol. For instance, the protocol is executed with respect to a fixed training set $S$ (with $|S|=N$), a fixed model architecture, and a fixed objective function for model training which defines the model output function $f: \pmone^N \to \real$, as introduced earlier.\footnote{Later in the paper, we will assume that $f: \pmone^N \to [-b,b]$ has a bounded range. This assumption is supported empirically by typical model output functions considered in the literature. For instance: change in correct-class margin at a specific test point \citep{ilyas2022datamodels}.}


\smallskip
\noindent\textbf{Termination.}
After the last message $\sV$ halts and outputs a single value
$\hat{\mathbf{a}}\in\{\mathrm{abort}\}\cup\real^{N}$,
interpreted respectively as rejection or an accepted
vector of attribution scores.
No further interaction occurs once abort or $\hat{\mathbf{a}}$ is produced.

\subsection{The Protocol Guarantees}
We are now ready to formalize our desired protocol guarantees. Namely, Completeness, Soundness, and Efficiency.
To this end, we adapt a previous formalization of a similar setting called Probably-Approximately-Correct Verification (PAC-Verification) \citep{goldwasser2021interactive}. After introducing the formalism, we will also comment on why it is necessary to use this framework, instead of other formalisms from Cryptography, such as Delegation of Computation protocols (see e.g., \citet{goldwasser2015delegating}). 

Let $\mathcal{X}$ be the space of training examples.
Recall the model output function $f: \{\pm 1\}^N \to \real$, which maps a representation of a training subset to a model behavior, and our goal to find an attribution vector $\mathbf{a}$ such that the linear predictor $\langle \mathbf{a}, \cdot \rangle$ has low $\mathrm{mse}^{(p)}(f, \langle \mathbf{a}, \cdot \rangle)$.

Let $\Phi(S) \in \real^N$ denote the optimal attribution vector with respect to MSE. 

Our error function $\mathrm{err}(\mathbf{a}, \Phi(S))$ measures the sub-optimality of a candidate attribution vector $\mathbf{a}$ in terms of its predictive MSE performance compared to $\Phi(S)$:
\[ \mathrm{err}(\mathbf{a}, \Phi(S)) \triangleq \mathrm{mse}^{(p)}(f, \langle \mathbf{a}, \cdot \rangle) - \mathrm{mse}^{(p)}(f, \langle \Phi(S), \cdot \rangle). \]
Note that $\mathrm{err}(\mathbf{a}, \Phi(S)) \ge 0$. The Verifier's goal is to accept $\mathbf{a}$ if this error gap is less than or equal to an accuracy threshold $\varepsilon$.

Finally, fix a Verifier \textit{cost function} $\kappa: (0,1)^2 \rightarrow \mathbb{N}$. 
\begin{definition}[PAC-verification for data attribution, adapted from {\cite{goldwasser2021interactive}}]
\label{def:pac-verif}
Fix accuracy $\varepsilon\in (0,1)$ and confidence $\delta\in(0,1)$.
An \textbf{interactive proof system} $(\sV, \sP)$ is an \textit{$(\varepsilon,\delta)$-PAC verifier for $\Phi$} if, for every dataset $S$, the following hold after $\sV$ and $\sP$ exchange messages, and then $\sV$ outputs a value $\hat{\mathbf{a}} \in \{\mathrm{abort}\} \cup \real^{N}$:

\smallskip
\noindent\textbf{Completeness.}
If $\sP$ abides by the protocol, then
\[
\Prob{\hat{\mathbf{a}}\neq\mathrm{abort} \ \wedge \ \mathrm{err}\left(\hat{\mathbf{a}},\Phi(S)\right)\le\varepsilon} \ \ge \ 1-\delta .
\]

\noindent\textbf{Soundness.}
For every (possibly computationally unbounded) \textit{dishonest} prover $\mathsf{P'}$,
\[
\Prob{\hat{\mathbf{a}}\neq\mathrm{abort} \ \wedge \
      \mathrm{err}\left(\hat{\mathbf{a}},\Phi(S)\right)>\varepsilon} \ \le \ \delta .
\]

\noindent\textbf{Efficiency.}
There exists a constant $k > 0$, such that $\kappa(\varepsilon, \delta) < (1/\varepsilon \cdot \log(1/\delta))^k$.
\end{definition}

The Verifier's cost is required to be \textit{independent} of $N=|S|$. For our choices of $\Phi$ and $\mathrm{err}$ (as defined above), the honest Prover's cost can be shown to necessarily grow with $N$. Hence, our protocols allow the Verifier to expend arbitrarily less cost than the Prover.

\textbf{Why PAC-Verification and not Delegation of Computation?}
One might initially consider employing general-purpose cryptographic protocols for delegation of computation or verifiable computation to ensure the prover $\sP$ performed the expensive attribution computation correctly. However, such approaches are insufficient for the verification goal central to our work and discussed by \citet{goldwasser2021interactive}. First of all, computational overhead for the Prover can be significant in existing cryptographic solutions; this already renders the approach very impractical in our setting, since the honest Prover may already be pushing the limits of their computational power. Second, Cryptographic delegation can typically only ensure that $\sP$ \textit{executed a specific computation as promised}, given some inputs. It cannot, in general, provide guarantees about the \textit{statistical quality} of the output, particularly whether the resulting attribution scores $\mathbf{a}'$ are indeed $\varepsilon$-close to the optimal scores $\Phi(S)$ according to our error metric $\mathrm{err}(\cdot, \cdot)$. We refer to section \ref{apx:Not_DoC} for a continued discussion.

\subsection{Main Theorem}

The main result of this paper is an $(\varepsilon, \delta)$-PAC verifier for data attribution, where correctness is measured by the MSE difference defined in $\mathrm{err}(\cdot, \cdot)$. At this point, we only need to define the Verifier cost function in order to state the theorem.

We will take the Verifier's cost to be the expected number of training runs they conduct (over randomness of the protocol execution).\footnote{This is but one way to define cost. For now, we only mention that this notion of \textit{sample complexity} in data attribution is a commonly used efficiency metric in the field (see e.g, \cite{ilyas2022datamodels, park2023trak}). We emphasize this metric as each retraining constitutes a significant computational effort, potentially scaling with the dataset size $N$ itself. Hence, it captures the primary bottleneck for the Verifier.}
We are now ready to state the theorem.

\begin{theorem}[Main Protocol Theorem]\label{thm:main}
    We assume that $f: \pmone^N \rightarrow [-b,b]$ for some constant $b$.
     For any $\varepsilon, \delta \in (0,1)$, Algorithm \ref{alg:pac_protocol} is a $(\varepsilon, \delta)$-PAC verifier for $\Phi$ (as defined above, with $\mathrm{err}$ measuring the MSE gap). The Verifier's cost function satisfies $\kappa \in O(\log(1/\delta)/\varepsilon^2)$. Furthermore, the interactive protocol requires only two messages.
\end{theorem}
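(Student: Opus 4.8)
The plan is to build the interactive protocol in two conceptual layers. The first layer is the non-interactive baseline already sketched in the excerpt: the Prover sends a candidate $\mathbf{a}'$, and the Verifier needs to estimate both $\mathrm{mse}^{(p)}(f, \langle \mathbf{a}', \cdot\rangle)$ (cheap, via $O(\log(1/\delta)/\varepsilon^2)$ fresh samples and a Hoeffding/Chernoff bound — this uses the boundedness $f \in [-b,b]$ to control the range of the squared residuals) and $\mathrm{mse}^{(p)}(f, \langle \Phi(S), \cdot\rangle)$ (expensive, via the residual-estimation algorithm of \citet{saunshi2022understanding}, costing $O(1/\varepsilon^3)$). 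The Verifier accepts $\mathbf{a}'$ iff the difference of the two estimates is at most $\varepsilon$ (with suitably rescaled constants). Completeness and soundness for this baseline are essentially immediate from the two concentration statements plus \thmref{thm:residual_estimation_guarantee}, since the soundness adversary only controls $\mathbf{a}'$, which is a fixed vector once sent, and the Verifier's MSE estimate for \emph{any} fixed vector is accurate with high probability.

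The second, main layer is to replace the $O(1/\varepsilon^3)$ residual-estimation step with a ``spot-checking'' interaction that pushes most of that work onto the Prover while keeping the Verifier at $O(\log(1/\delta)/\varepsilon^2)$. Concretely: the Verifier, in its single message, sends a random seed (or a batch of random coordinates/subsets) specifying which quantities the Prover must compute inside the residual-estimation subroutine; the Prover, in its single reply, returns the claimed values of those quantities, and the Verifier re-derives the residual estimate from the Prover's transcript but only \emph{spot-checks} a random $O(\log(1/\delta)/\varepsilon^2)$-sized subsample of the Prover's claimed $f$-evaluations by recomputing them itself. Here the key technical content is a non-black-box reading of the \citet{saunshi2022understanding} analysis: I would open up their proof, identify the sum (or average) of bounded terms whose empirical mean is the residual estimate, and show that this estimator is \emph{adversarially robust} in the sense that if the Prover corrupts more than a $\gamma$-fraction of the terms, the resulting estimate is off by more than $\varepsilon$, so a random spot-check of $O(\gamma^{-1}\log(1/\delta))$ terms catches a cheating Prover with probability $1-\delta$; conversely if at most a $\gamma$-fraction are corrupted the estimate is still $\varepsilon$-accurate. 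Choosing $\gamma$ as a constant (tuned against the $\varepsilon$-accuracy budget via the boundedness of the terms) yields the $O(\log(1/\delta)/\varepsilon^2)$ Verifier cost, while the number of messages is exactly two by construction.

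Wrapping up: I would state the protocol formally as Algorithm~\ref{alg:pac_protocol}, then prove (i) \textbf{Completeness} — an honest Prover's transcript passes every spot-check deterministically, both MSE estimates are $\varepsilon/3$-accurate except with probability $\delta/2$ each, and an honest $\mathbf{a}'$ has error gap $0 \le \varepsilon$, so acceptance happens with probability $\ge 1-\delta$; (ii) \textbf{Soundness} — condition on the event that the Verifier's own fresh-sample estimates are accurate (probability $\ge 1-\delta/2$), then split on whether the Prover corrupted more than a $\gamma$-fraction of the residual-estimation terms: if yes, the spot-check rejects except with probability $\delta/2$; if no, the reconstructed residual estimate is $\varepsilon/3$-accurate by the robustness lemma, so any accepted $\mathbf{a}'$ genuinely has $\mathrm{err}(\mathbf{a}',\Phi(S)) \le \varepsilon$; a union bound gives the $\delta$ soundness error; (iii) \textbf{Efficiency} — sum the three $O(\log(1/\delta)/\varepsilon^2)$ sample budgets. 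The main obstacle is step (ii)'s robustness lemma: the \citet{saunshi2022understanding} residual estimator is not a plain average of i.i.d. bounded samples — it involves nested/debiased estimators over $p$-biased subsets — so I must verify that a single corrupted $f$-evaluation perturbs the final estimate by a bounded, ``local'' amount (no amplification through the nesting), which is exactly where the non-black-box analysis is needed; a secondary subtlety is making sure the Prover cannot adaptively choose \emph{which} terms to corrupt based on the Verifier's seed in a way that dodges the spot-check, which I handle by having the Verifier draw the spot-check subsample \emph{after} (and independently of) revealing the seed, so the Prover's corruption pattern is fixed before the subsample is chosen.
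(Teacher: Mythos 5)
Your two-layer architecture---a baseline non-interactive check, then an interactive layer that offloads the $O(1/\varepsilon^3)$ residual-estimation evaluations onto the Prover and defends them with random spot-checks, backed by a non-black-box robustness analysis of the \citet{saunshi2022understanding} estimator---is exactly the structure of the paper's proof, and you correctly pinpoint the robustness lemma as the technical crux (this is Lemma~\ref{lem:robust_residual_estimation}). You also correctly observe that soundness of the plain MSE check is automatic once $\mathbf{a}'$ is fixed, and that the spot-check subsample must be drawn independently of the Prover's corruption pattern.

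There is, however, a genuine quantitative error in your treatment of the corruption threshold $\gamma$, and your own writeup is internally inconsistent on this point. You assert that $\gamma$ can be ``chosen as a constant'' and simultaneously that a spot-check of $O(\gamma^{-1}\log(1/\delta))$ terms yields a Verifier cost of $O(\log(1/\delta)/\varepsilon^2)$; if $\gamma$ were a fixed constant the spot-check would cost only $O(\log(1/\delta))$, so one of these two statements has to give. In fact neither soundness direction survives a constant $\gamma$. Each noise-stability estimate $\hat{y}_j$ is an average of $\Theta(n)$ bounded terms, so corrupting a $\gamma$-fraction of the $n = O(1/\varepsilon^3)$ evaluations perturbs $\hat{y}_j$ by $O(\gamma b^2)$. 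This perturbation is then amplified by the low-degree polynomial fit and the factor $1/\rho = \Theta(1/\sqrt{\varepsilon})$ in the final residual estimate, giving a shift of order $\gamma b^2 / \sqrt{\varepsilon}$. For constant $\gamma$ this blows up as $\varepsilon \to 0$; for the final error to stay $O(\varepsilon)$ you need $\gamma = O(\varepsilon^{3/2})$ at best. The paper sets the absolute corruption budget at $m^\star = O(1/\varepsilon)$ out of $|E| = O(\log(1/\delta)/\varepsilon^3)$ challenges, i.e.\ a $\Theta(\varepsilon^2)$-fraction, and then Lemma~\ref{lem:robust_residual_estimation} shows the adversarial contribution $\|\hat{\mathbf y} - \tilde{\mathbf y}\|_\infty = O(b^2 m/n) = O(b^2\varepsilon^2)$ is dominated by the sampling error $O(\varepsilon^{3/2})$, so the overall residual estimate retains its $O(b^2\varepsilon)$ accuracy. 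This is the step where the $1/\varepsilon^2$ in the spot-check budget actually comes from: you must detect corruption down to an $O(\varepsilon^2)$-fraction, which costs $O(\varepsilon^{-2}\log(1/\delta))$ spot-checks. Once you replace ``constant $\gamma$'' by $\gamma = \Theta(\varepsilon^2)$ (equivalently $m^\star = O(1/\varepsilon)$ absolute corruptions), your argument matches the paper's.
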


\textbf{Scalability to Multiple Attribution Tasks.}

While our core protocol (Algorithm \ref{alg:pac_protocol}) is presented for verifying attributions with respect to a single model output function $f$, a natural question arises regarding its applicability when attributions are needed for multiple scenarios simultaneously---for instance, across $Z$ different test points or for $Z$ distinct output metrics. Our framework extends efficiently to such cases. The key observation is that, to maintain an overall $(\varepsilon, \delta)$-PAC verification guarantee across all $Z$ attribution tasks, a union bound can be applied to the confidence parameters. This means that the number of challenge subsets requested from the Prover for residual estimation, and the number of local samples used by the Verifier for its final MSE checks, would increase logarithmically in $Z$. We refer to section \ref{apx:multiple_outputs_discussion} for details.

\section{A ``Simple'' Non-Interactive Verification Protocol}
\label{sec:baseline_protocol}

This section will use standard notation from Boolean Harmonic Analysis. We refer to section \ref{sec:harmonic_analysis_prelims} for the necessary definitions (see instead \citet{o2014analysis} for the comprehensive treatment).

Before presenting our main interactive protocol, we first outline a simpler, \textit{non-interactive} approach to PAC-verifying empirical influence attribution $\Phi(S)$. This serves to establish a baseline Verifier complexity and motivate the introduction of interaction to achieve greater efficiency.

Recall the verification goal defined in \defref{def:pac-verif}. The Verifier $\sV$ receives a candidate attribution vector $\mathbf{a}'$ from the Prover $\sP$ and needs to determine if it is $\varepsilon$-close to $\Phi(S)$. Using our chosen error metric, this means verifying if:
\[
\mathrm{err}(\mathbf{a}', \Phi(S)) = \mathrm{mse}^{(p)}(f, \langle \mathbf{a}', \cdot\rangle) - \mathrm{mse}^{(p)}(f, \langle \Phi(S), \cdot \rangle) \le \varepsilon.
\]

A close relationship exists between $\mathrm{mse}^{(p)}(f, \langle \Phi(S), \cdot \rangle)$ and the \textit{Boolean Fourier coefficients} of $f$ (see section \ref{sec:harmonic_analysis_prelims} for the definition and background on Fourier coefficients).  
In fact, as demonstrated by \cite{saunshi2022understanding}, the optimal linear predictor $\langle \Phi(S), \cdot \rangle$ satisfies the identity:
\[
\mathrm{mse}^{(p)}(f, \langle \Phi(S), \cdot \rangle) = \sum_{S \subseteq [N]: |S| \ge 2} \hat{f}_S^2 \triangleq B_{\ge 2}.
\]
Therefore, the verification condition is equivalent to checking if the Prover's solution $\mathbf{a}'$ satisfies:
\[
\mathrm{mse}^{(p)}(f, \langle \mathbf{a}', \cdot\rangle) \le B_{\ge 2} + \varepsilon.
\]

The ``residual estimation'' algorithm developed by \citet{saunshi2022understanding} provides a useful tool. As stated in \thmref{thm:residual_estimation_guarantee}, their algorithm, denoted $\textsc{ResidualEstimation}$, allows estimating $B_{\ge 2}$ up to an additive error $\varepsilon'$ with high probability, using only $n = O(1/(\varepsilon')^3 \cdot \polylog(1/\delta))$ evaluations (samples) of the function $f$. Importantly, this algorithm estimates the residual error \textit{without} needing to compute the optimal linear predictor $\Phi(S)$ itself.

\begin{theorem}[Residual Estimation, restated {\cite[Theorem 3.2]{saunshi2022understanding}}]
\label{thm:residual_estimation_guarantee}
Let $f: \pmone^N \to \real$. Let $\hat{B}_{\ge 2}$ be the output of the residual estimation algorithm $\textsc{ResidualEstimation}$ using degree $d=2$ polynomial fitting based on noise stability estimates at points $[0, \rho, 2\rho]$ (along with $\rho=1$), obtained using a total budget of $n$ calls to $f$. If $n = O(1/\varepsilon^3)$ and $\rho = \Theta(\sqrt{\varepsilon})$, then with high probability (e.g., $1-\delta$ for small constant $\delta$),
\[ |\hat{B}_{\ge 2} - B_{\ge 2}| < \varepsilon, \]
where $B_{\ge 2} = \sum_{S:|S|\ge 2} \hat{f}_S^2$ is the true residual error under the $\cB_p$ distribution.
\end{theorem}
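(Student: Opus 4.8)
The plan is to reduce estimating $B_{\ge 2}$ to estimating a constant number of \emph{noise stabilities} of $f$ under $\mathcal{B}_p$, each of which is an expectation of a product $f(x)f(y)$ over a correlated pair and hence cheap to estimate by querying $f$. For $\gamma\in[0,1]$ let $g(\gamma):=\mathrm{Stab}_\gamma(f)=\mathbb{E}_{x,y}[f(x)f(y)]$, where $x\sim\mathcal{B}_p$ and $y$ is the $\gamma$-correlated copy of $x$ (each coordinate kept with probability $\gamma$ and otherwise resampled from its marginal under $\mathcal{B}_p$). Standard $p$-biased Fourier analysis gives $g(\gamma)=\sum_S\gamma^{|S|}\hat f_S^2=\sum_{k\ge0}\gamma^k W_k$ with $W_k:=\sum_{|S|=k}\hat f_S^2\ge0$; since $\sum_k W_k=\|f\|_2^2=g(1)$, $W_0=g(0)=(\mathbb{E} f)^2$, and $W_1=g'(0)$, we obtain the exact identity $B_{\ge2}=g(1)-g(0)-g'(0)$. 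So it suffices to recover $g(1)$, $g(0)$, and $g'(0)$ from queries to $f$.

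The one non-measurable quantity is $g'(0)$, and here I would use a \emph{degree-$2$ fit}: the derivative at $0$ of the quadratic interpolating $g$ at $\{0,\rho,2\rho\}$ is the forward finite difference $D:=\frac{-3g(0)+4g(\rho)-g(2\rho)}{2\rho}$, which is exact on polynomials of degree $\le2$, so on $g(\gamma)=\sum_k W_k\gamma^k$ it returns $W_1$ with an error contributed only by levels $k\ge3$. A short term-by-term estimate (the level-$k$ error coefficient is $\rho^{k-1}(4-2^k)/2$, which vanishes for $k\le2$ and has absolute value at most $(2\rho)^{k-1}$ for $k\ge3$) bounds this error by $4\rho^2\sum_{k\ge3}W_k\le4\rho^2\|f\|_2^2$ when $\rho\le\tfrac12$. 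Thus with exact stability values $\widetilde B:=g(1)-g(0)-D$ satisfies $|\widetilde B-B_{\ge2}|=|D-g'(0)|=O(\rho^2)$ (treating $\|f\|_\infty=O(1)$), which is $\le\varepsilon/2$ once $\rho=\Theta(\sqrt\varepsilon)$. A two-point derivative would instead leave the $\rho W_2$ term, i.e.\ bias $\Theta(\rho)$, forcing $\rho=\Theta(\varepsilon)$ and a worse budget — this is exactly why the degree-$2$ fit is used.

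Finally, estimate each $g(\gamma)$, $\gamma\in\{0,\rho,2\rho,1\}$, by the empirical mean of $f(x)f(y)$ over $m$ independent $\gamma$-correlated pairs ($x,y$ independent for $\gamma=0$; $y=x$ for $\gamma=1$). Since $|f(x)f(y)|\le\|f\|_\infty^2$, Hoeffding gives $|\hat g(\gamma)-g(\gamma)|\le\eta$ with probability $\ge1-\delta/8$ from $m=O(\|f\|_\infty^4\log(1/\delta)/\eta^2)$ pairs, i.e.\ $O(m)$ queries; output $\hat B:=\hat g(1)-\hat g(0)-\frac{-3\hat g(0)+4\hat g(\rho)-\hat g(2\rho)}{2\rho}$. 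Viewing $\hat B$ as a linear combination of the four estimates, the coefficients on $\hat g(0),\hat g(\rho),\hat g(2\rho)$ are $\Theta(1/\rho)=\Theta(1/\sqrt\varepsilon)$ and that on $\hat g(1)$ is $1$, so estimating the three small-$\gamma$ stabilities to accuracy $\eta=\Theta(\varepsilon\rho)=\Theta(\varepsilon^{3/2})$ and $g(1)$ to accuracy $\Theta(\varepsilon)$ keeps the total statistical error $\le\varepsilon/2$; combined with the previous paragraph this gives $|\hat B-B_{\ge2}|<\varepsilon$. The cost is dominated by the three small-$\gamma$ estimates at $O(\log(1/\delta)/\varepsilon^3)$ each, so summing the $O(1)$ budgets and union-bounding the $O(1)$ failure events yields a total of $n=O(\log(1/\delta)/\varepsilon^3)$, i.e.\ $n=O(1/\varepsilon^3)$ and $\rho=\Theta(\sqrt\varepsilon)$ as claimed.

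The hard part is the bias–variance balance of the last step: reconstructing $W_1$ from stability values amplifies sampling error by $1/\rho$ while the reconstruction bias is only $\rho^2$, so a careless choice (two-point derivative, or picking $\rho$ to kill the bias without accounting for amplification) yields $O(1/\varepsilon^4)$; getting $O(1/\varepsilon^3)$ hinges on pairing the degree-$2$ cancellation with $\rho=\Theta(\sqrt\varepsilon)$. The remaining care is routine but necessary: working with the $p$-biased noise operator (eigenvalue $\gamma^{|S|}$ on $\hat f_S$) and its matching correlated-sampling estimator rather than the uniform one, and tracking the dependence on $\|f\|_\infty$ — which is why \thmref{thm:main} assumes $f$ has bounded range $[-b,b]$.
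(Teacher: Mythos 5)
Your proof is correct and it follows essentially the same architecture as the algorithm the paper describes in its Appendix B (and references from Saunshi et al.\ Theorem 3.2): estimate $h_f(\cdot)$ at the four points $\{0,\rho,2\rho,1\}$, recover the degree-$\le 1$ Fourier weight from a degree-2 polynomial fit, then set $\hat B_{\ge 2}=\hat h_f(1)-\hat B_0-\hat B_1$; the bias is $O(\rho^2)$ from truncating at level 2, the sensitivity to estimation noise is $\Theta(1/\rho)$, and balancing at $\rho=\Theta(\sqrt\varepsilon)$ with noise-stability estimates accurate to $\Theta(\varepsilon^{3/2})$ gives the $O(1/\varepsilon^3)$ budget. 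The paper itself only restates this theorem without proof; the internal details it does reveal — in the proof of Lemma~\ref{lem:robust_residual_estimation}, where the perturbation of the estimates is propagated through a constrained least-squares solve with a $3\times3$ Vandermonde matrix $\mathbf{A}$ — align exactly with your bias/sensitivity accounting.

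The one substantive deviation is that the paper's $\textsc{ResidualEstimation}$ uses \emph{non-negative} least squares to fit $(B_0,B_1,B_2)$, whereas you interpolate exactly (equivalently, solve the square system unconstrained) and read off $\hat B_1$ as a forward finite difference $D=\frac{-3g(0)+4g(\rho)-g(2\rho)}{2\rho}$. Because $\mathbf{A}$ is square and invertible, your $D$ coincides with the unconstrained LS estimate of $B_1$; the non-negativity constraint is a projection refinement that can only help (the true $B_k\ge 0$ lie in the feasible set), so your bound also upper-bounds the constrained estimator's error and the rate is unchanged. Two small nits on precision rather than substance: (i) the phrase ``the level-$k$ error coefficient is $\rho^{k-1}(4-2^k)/2$, which vanishes for $k\le 2$'' reads as if that formula literally gives $0$ at $k=0,1$; what actually happens is that the $-3g(0)$ term cancels $k=0$, the formula gives exactly $1$ at $k=1$ (so $D$ has $W_1$ with coefficient $1$), and it vanishes at $k=2$, so the \emph{error} $D-W_1$ has support on $k\ge 3$ only — the conclusion is right, the stated formula is just the coefficient of $D$ on $W_k$ for $k\ge 1$, not the coefficient of $D-W_1$; and (ii) the bias bound $4\rho^2 B_{\ge 3}$ requires $B_{\ge 3}\le \|f\|_2^2 \le \|f\|_\infty^2=O(1)$, which you flag at the end but which should be stated up front since the theorem as written does not impose a boundedness hypothesis on $f$ (the surrounding paper does, via $f:\pmone^N\to[-b,b]$).
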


This immediately suggests a non-interactive PAC-verification protocol:
\begin{enumerate}
    \item \textbf{Prover $\sP$:} Computes $\mathbf{a}'$ (e.g., its best estimate of $\Phi(S)$) and sends it to $\sV$.
    \item \textbf{Verifier $\sV$ (Local Computation):}
        \begin{itemize}
            \item[(a)] Estimates the MSE of the Prover's solution: Compute $\widehat{\mathrm{mse}} \approx \mathrm{mse}^{(p)}(f, \langle \mathbf{a}', \cdot\rangle)$ by drawing $M = O(1/\varepsilon^2 \cdot \polylog(1/\delta))$ samples $x \sim \mathcal{B}_p$, locally training models to evaluate $f(x)$ for each sample, and calculating the empirical average squared error $(f(x) - \langle \mathbf{a}', x \rangle)^2$.
            \item[(b)] Estimates the optimal residual error: Compute $\hat{B}_{\ge 2} \approx B_{\ge 2}$ by running $\textsc{ResidualEstimation}(f, \rho, n)$ locally, setting the budget $n = O(1/\varepsilon^3 \cdot \polylog(1/\delta))$ and $\rho = \Theta(\sqrt{\varepsilon})$. This requires locally evaluating $f$ on $n$ (pairs of) input subsets.
        \end{itemize}
    \item \textbf{Verifier $\sV$ (Decision):} If $\widehat{\mathrm{mse}} \le \hat{B}_{\ge 2} + \varepsilon''$, output $\mathbf{a}'$ (where $\varepsilon''$ accounts for the combined estimation errors from steps (a) and (b), we can assume $\varepsilon'' \approx \varepsilon$). Otherwise, output $\mathrm{abort}$.
\end{enumerate}

See section \ref{apx:residual_estimation_algo_overview} in the appendix for an overview of how the residual estimation algorithm works.

\textbf{Informal Analysis of the Non-Interactive Protocol.}
Completeness and Soundness of this protocol can be seen to follow from the accuracy guarantees of standard MSE estimation (via Hoeffding/Chernoff bounds) and the guarantee of the residual estimation algorithm (\thmref{thm:residual_estimation_guarantee}), using a union bound over the failure probabilities.

The Verifier's efficiency is measured by the number of calls to $f$ (i.e., model trainings). Step (a) requires $O(1/\varepsilon^2 \cdot \polylog(1/\delta))$ calls. Step (b), the residual estimation, requires $O(1/\varepsilon^3 \cdot \polylog(1/\delta))$ calls. The overall Verifier cost $\kappa(\varepsilon, \delta)$ is therefore dominated by the residual estimation:
\[
\kappa_{\text{residual}}(\varepsilon, \delta) \in O(1/\varepsilon^3 \cdot \polylog(1/\delta)).
\]
While this cost is independent of the dataset size $N$, the cubic dependence on $1/\varepsilon$ can still be substantial.

\textbf{Motivation for Interaction.}
The $O(1/\varepsilon^3)$ complexity arises directly from the need to execute the residual estimation algorithm locally. This motivates exploring an \textit{interactive} approach. If $\sV$ could somehow leverage the computational power of $\sP$ to perform the $O(1/\varepsilon^3)$ function evaluations required for residual estimation, while still retaining statistical guarantees against a potentially dishonest $\sP$, the Verifier's workload could potentially be reduced.

This is precisely the approach taken in the remainder of this paper. We develop a two-message interactive protocol where $\sV$ challenges $\sP$ to perform the function evaluations needed for residual estimation. To ensure $\sP$'s responses are trustworthy, $\sV$ employs a \textit{spot-checking} mechanism, verifying a small random subset of $\sP$'s computations locally. As we will show, this spot-checking strategy, combined with a proof that the residual estimation algorithm is robust to a limited number of undetected errors (\secref{sec:robust_residual_estimation_section}, Lemma \ref{lem:robust_residual_estimation}), allows $\sV$ to achieve PAC verification with a reduced cost of $O(1/\varepsilon^2 \cdot \polylog(1/\delta))$, matching the complexity of the simpler MSE estimation step.

\section{An Improved Interactive Protocol}
\label{sec:protocol_outline}

In this section, we describe our improved interactive protocol. The formal protocol and its analysis are given in Appendix section \ref{sec:full_protocol}. Our protocol uses as a subroutine the residual estimation algorithm (Algorithm 1) of \cite{saunshi2022understanding}. 
As part of the analysis of our improved protocol, we will need to analyze this residual estimation algorithm precisely, using the details of the algorithm's implementation. In Appendix section \ref{apx:residual_estimation_algo_overview}, we give a detailed overview of the algorithm.

\begin{enumerate}[label=\(\bullet\), leftmargin=5mm,nosep]
\itemsep0.1cm
    \item \textbf{Round 1 ($\sV \to \sP$): Challenge Setup.}
        \begin{itemize}
            \item The Verifier $\sV$ generates a set of computational ``challenges'' for the Prover $\sP$. These challenges consist of $|E| = O(\log(1/\delta)/\varepsilon^3)$ randomly selected subsets of the training data $S \sim \mathcal{B}_p$, using specific random seeds provided by $\sV$. Here $|E|$ is the number of samples that will be used for the residual estimation algorithm.
            \item $\sV$ secretly designates a randomly chosen subset of $k=O(\log(1/\delta)/\varepsilon^2)$ of these challenges for later ``spot-checking.''
            \item $\sV$ sends the list of training subsets and their corresponding random seeds to $\sP$.
        \end{itemize}
    \item \textbf{Round 2 ($\sP \to \sV$): Honest Prover's Response.}
        \begin{itemize}
            \item The Prover $\sP$ computes the optimal data attribution scores for the full dataset $S$.
            \item $\sP$ performs the training runs requested by $\sV$ in Round 1, using the specified subsets and random seeds.
            \item $\sP$ sends the resulting trained model weights for all challenges, along with the computed attribution scores $\mathbf{a}$, back to $\sV$.
        \end{itemize}
    \item \textbf{Round 3 ($\sV$): Verification.}
        \begin{itemize}
            \item \textbf{Spot-Checking:} 
            $\sV$ repeats training runs for the $k$ challenges designated for spot-checking in Round 1. $\sV$ checks if the resulting models are equivalent to those sent by $\sP$ (using an equivalence checking procedure). If any inequivalence occurs, $\sV$ aborts.
            \item \textbf{Consistency Check:} If the spot-checks pass, $\sV$ uses the model results provided by $\sP$ (for all non-spot-checked challenges) to perform two estimations:
                \begin{itemize}
                    \item[(a)] For $O(\log(1/\delta)/\varepsilon^2)$ models, evaluate the model output function $f$, and use them to estimate $\alpha = \mathrm{mse}^{(p)}(f, \langle \mathbf{a}, \cdot \rangle)$.
                    \item[(b)] Run the residual estimation algorithm (Theorem \ref{thm:residual_estimation_guarantee}) to estimate $\beta = \min_\mathbf{w} \mathrm{mse}^{(p)}(f, \langle \mathbf{w}, \cdot\rangle)$.
                \end{itemize}
            \item $\sV$ compares $\alpha$ and $\beta$. If $\alpha$ and $\beta$ deviate by more than $\varepsilon$, $\sV$ aborts.
            \item \textbf{Output:} If all checks pass, $\sV$ accepts and outputs the attribution scores provided by $\sP$. Otherwise, $\sV$ outputs $\mathrm{abort}$.
        \end{itemize}
\end{enumerate}

Now that we have a sense of the key steps involved in the improved protocol, we can sketch a proof of its guarantees.

\subsection{Proof Sketch of Main Theorem}
Due to space constraints we defer the full proof to Appendix section~\ref{sec:main_thm_proof}, and simply sketch the main ideas behind each correctness guarantee for the protocol.

\textbf{Completeness.} To argue completeness, we start with the observation that if the Prover is honest, then the ``spot-checks'' used in the protocol will pass. In this case, the Verifier is able to accurately estimate both the optimal linear prediction error using the residual estimation algorithm, and the actual error of the Prover's candidate attribution scores using local training runs. 
    
Since the honest Prover submits the optimal scores $\mathbf{a}^\star = \Phi(S)$ (or $\epsilon$-close to optimal), these two errors are close, causing the final consistency check to pass and the Verifier to accept the correct attribution.
    
\textbf{Soundness.} To prove soundness, we will consider different cases. 
First, where a dishonest Prover lies about ``many'' of the challenges, and second, where the dishonest Prover lies about ``a few'' of the challenges.
In case one, we show that the Prover will be caught with high probability by the Verifier's random spot-checks; intuitively if the Prover lies about more than a certain fraction of the requested trainings, the Verifier will catch the Prover after conducting the appropriate amount of spot checks, with high probability. In this case, the Verifier outputs ``abort'' with high probability.
In case two, the Prover might lie about less than that fraction of the challenges, and we will show that the Verifier's spot checks may all pass, but that despite this, the residual estimation will still be good. At a technical level, in Lemma~\ref{lem:robust_residual_estimation} we show that the residual estimation algorithm of \cite{saunshi2022understanding} is robust to $O(1/\epsilon)$ adversarial corruptions.
As a result, in this case, even if the Prover has submitted incorrect candidate attribution scores $\mathbf{a}'$, then the Verifier's local estimate of the high error $\mathrm{mse}^{(p)}(f, \langle \mathbf{a}', \cdot \rangle)$ will then significantly exceed the estimated optimal error, causing the final consistency check to fail and the Verifier to abort. 
    
\textbf{Efficiency.} To show efficiency, we will directly estimate the expected number of model trainings done by the Verifier, which is straightforward: it is the sum of the number of model trainings in Round $3$, which consists of $O(\log(1/\delta)/\epsilon^2)$ spot checks, and $O(\log(1/\delta)/\epsilon^2)$ to estimate the MSE. Thus the overall cost is $O(\log(1/\delta)/\epsilon^2)$. 

\bibliographystyle{apalike}
\bibliography{bib.bib}

\subsection{Proof Sketch of Main Theorem}
Due to space constraints we defer the full proof to Appendix section~\ref{sec:main_thm_proof}, and simply sketch the main ideas behind each correctness guarantee for the protocol.

\textbf{Completeness.} To argue completeness, we start with the observation that if the Prover is honest, then the ``spot-checks'' used in the protocol will pass. In this case, the Verifier is able to accurately estimate both the optimal linear prediction error using the residual estimation algorithm, and the actual error of the Prover's candidate attribution scores using local training runs. 
    
Since the honest Prover submits the optimal scores $\mathbf{a}^\star = \Phi(S)$ (or $\epsilon$-close to optimal), these two errors are close, causing the final consistency check to pass and the Verifier to accept the correct attribution.
    
\textbf{Soundness.} To prove soundness, we will consider different cases. 
First, where a dishonest Prover lies about ``many'' of the challenges, and second, where the dishonest Prover lies about ``a few'' of the challenges.
In case one, we show that the Prover will be caught with high probability by the Verifier's random spot-checks; intuitively if the Prover lies about more than a $1/\epsilon$ of the requested trainings, the Verifier will catch the Prover after conducting the appropriate amount of spot checks, with high probability. In this case, the Verifier outputs ``abort'' with high probability.
In case two, the Prover might lie about less than $1/\epsilon$ of the challenges, and we will show that the Verifier's spot checks may all pass, but that despite this, the residual estimation will still be good. At a technical level, in Lemma~\ref{lem:robust_residual_estimation} we show that the residual estimation algorithm of \cite{saunshi2022understanding} is robust to $O(1/\epsilon)$ adversarial corruptions.
As a result, in this case, even if the Prover has submitted incorrect candidate attribution scores $\mathbf{a}'$, then the Verifier's local estimate of the high error $\mathrm{mse}^{(p)}(f, \langle \mathbf{a}', \cdot \rangle)$ will then significantly exceed the estimated optimal error, causing the final consistency check to fail and the Verifier to abort.

\textbf{Efficiency.} To show efficiency, we will directly estimate the expected number of model trainings done by the Verifier, which is straightforward: it is the sum of the number of model trainings in Round $3$, which consists of $O(\log(1/\delta)/\epsilon^2)$ spot checks, and $O(\log(1/\delta)/\epsilon^2)$ to estimate the MSE. Thus the overall cost is $O(\log(1/\delta)/\epsilon^2)$.


\appendix

\newpage

\section{Preliminaries}

\subsection{Harmonic Analysis on the \pbiased \ Cube}
\label{sec:harmonic_analysis_prelims}

Our work (and important prior work like \citet{goldwasser2021interactive} and \citet{saunshi2022understanding}) leverage tools that are grounded in the analysis of real-valued functions defined on the discrete hypercube, specifically under a non-uniform measure known as the $p$-biased distribution (see e.g., \cite{o2014analysis} for a comprehensive overview of Boolean Fourier analysis). In this section, we will go over the necessary concepts for understanding the mathematics presented in this paper.

Let $N = |S|$ be the size of the training dataset. Recall that we represent a subset of the training data by a vector $x \in \pmone^N$, where $x_i = 1$ indicates the $i$-th datapoint is included and $x_i = -1$ indicates it is excluded. The model's behavior (e.g., loss or logit difference on a test point) after training on the subset represented by $x$ is given by a function $f: \pmone^N \to \real$.

We consider the \textit{$\pbiased$ distribution} $\cB_p$ over $\pmone^N$, where each coordinate $x_i$ is independently chosen to be $1$ with probability $p$ and $-1$ with probability $1-p$. Let $\mu = \ex{x_i} = p - (1-p) = 2p - 1$ and $\sigma^2 = \mathrm{Var}(x_i) = \ex{x_i^2} - (\ex{x_i})^2 = 1 - (2p-1)^2 = 4p(1-p)$. The uniform distribution corresponds to $p = 1/2$, where $\mu=0$ and $\sigma^2 = 1$.

The space of functions $f: \pmone^N \to \real$ forms a vector space equipped with the inner product $\innerBp{f}{g} = \Ex{x \sim \cB_p}{f(x)g(x)}$. An orthonormal basis for this space is given by the characters $\{\phis\}_{S \subseteq [N]}$, defined as:
\[
\phis(x) = \prod_{i \in S} \frac{x_i - \mu}{\sigma}
\]
These basis functions satisfy $\ex{\phis(x)} = 0$ for $S \neq \emptyset$ and $\innerBp{\phis}{\phi_T} = \delta_{S,T}$ (Kronecker delta). Any function $f$ can be uniquely expanded in this basis as:
\[
f(x) = \sum_{S \subseteq [N]} \fhats \phis(x),
\]
where $\fhats = \innerBp{f}{\phis} = \Ex{x \sim \cB_p}{f(x)\phis(x)}$ are the Fourier coefficients of $f$. Parseval's identity states:
\[
\normBp{f}^2 = \Ex{x \sim \cB_p}{f(x)^2} = \sum_{S \subseteq [N]} \fhats^2
\]
The coefficients $\fhats$ capture the contribution of interactions among the datapoints in $S$ to the function $f$. Of particular importance are the degree-1 coefficients $\hat{f}_{\{i\}}$, which, up to scaling by $2/\sigma$, correspond to the average influence of datapoint $i$ and the optimal coefficients for linear datamodels \citep[Theorem 2.2]{saunshi2022understanding}. To denote the total Fourier weight at degree $k$, let 
\begin{equation}
B_k \triangleq \sum_{S:|S|=k} \fhats^2
\end{equation}

To analyze the structure of $f$, we use the concept of noise stability.
\begin{definition}[$\rho$-correlated variables]
\label{def:rho_correlated}
For $x \in \pmone^N$ drawn from $\cB_p$, we say a random variable $x'$ is $\rho$-correlated to $x$ if it is sampled coordinate-wise independently as follows: For each $i \in [N]$,
\begin{itemize}
    \item If $x_i = 1$, then $x'_i = -1$ with probability $(1-p)(1-\rho)$, and $x'_i = 1$ otherwise.
    \item If $x_i = -1$, then $x'_i = 1$ with probability $p(1-\rho)$, and $x'_i = -1$ otherwise.
\end{itemize}
Crucially, $x'$ is also distributed according to $\cB_p$.
\end{definition}

The \textit{noise stability} of $f$ at noise rate $\rho \in [0,1]$ is defined as $h_f(\rho) = \Ex{x, x'}{f(x)f(x')}$, where $x \sim \cB_p$ and $x'$ is $\rho$-correlated to $x$. It admits a simple expression in terms of the Fourier weights:
\begin{equation} \label{eq:noise_stability_poly}
h_f(\rho) = \sum_{k=0}^N B_k \rho^k
\end{equation}
This shows that the noise stability is a polynomial in $\rho$, whose coefficients are precisely the total Fourier weights $B_k$ at each degree $k$.

\section{Efficient Estimation of Linear Datamodel Residual}
\label{apx:residual_estimation_algo_overview} 

As established by \citet{saunshi2022understanding} (specifically their Theorem 2.2), the quality of the best linear datamodel approximation $\theta_0 + \sum_{i=1}^N \theta_i x_i$ for a function $f: \{\pm 1\}^{N} \to \real$ under the $p$-biased distribution $\mathcal{B}_p$ is determined by its residual error:
\[
R(\theta^{\star}) \triangleq B_{\ge 2} = \sum_{S \subseteq [N]: |S| \ge 2} \hat{f}_S^2
\]

where $\theta^{\star}$ denotes the coefficients that minimize MSE of the datamodel $\theta_0 + \sum_{i=1}^N \theta_i x_i$ with respect to the model output function $f$ (\cite{saunshi2022understanding} use the notation $R(\theta^{\star})$, so we include that here for clarity, even though throughout the paper we refer to MSE explicitly).

\cite{saunshi2022understanding} cleverly leverage the connection between degree-$k$ weight of $f$ and the noise stability of $f$, $h_f(\rho)$. Recall from \eqref{eq:noise_stability_poly} that 
\[
h_f(\rho) = \sum_{k=0}^N B_k \rho^k
\]
Separately, the total squared norm is $B = \normBp{f}^2 = \sum_{k=0}^N B_k = h_f(1)$.
Therefore, the residual error can be expressed purely in terms of the degree-0 and degree-1 Fourier weights and the total norm: 
\[
B_{\ge 2} = B - B_0 - B_1 = h_f(1) - B_0 - B_1
\] 

The algorithm from \cite{saunshi2022understanding} leverages this fact as follows:

\begin{enumerate}
    \item \textbf{Estimate Noise Stability at Key Points:}
        Choose a small number $k$ of distinct noise levels $\{\rho_1, \dots, \rho_k\}$ (e.g., $0, \rho, 2\rho$ for some small $\rho$) and additionally use $\rho=1$.
        For each chosen $\rho_j$ (and for $\rho=1$), estimate the noise stability value $h_f(\rho_j)$ by sampling multiple pairs $(x, x')$ of $\rho_j$-correlated inputs drawn from $\mathcal{B}_p$ and averaging the product $f(x)f(x')$. Let these estimates be $\hat{y}_j \approx h_f(\rho_j)$ and $\hat{B} \approx h_f(1)$. 

    \item \textbf{Low-Degree Polynomial Fitting:}
        Since $h_f(\rho) = B_0 + B_1 \rho + B_2 \rho^2 + \dots$, use the estimated values $(\rho_j, \hat{y}_j)$ from the previous step to fit a low-degree polynomial (e.g., degree $d=2$) $P(\rho) = \sum_{i=0}^d \hat{B}_i \rho^i$. This is done via (non-negative) least squares, solving for the coefficients $\hat{B}_0, \hat{B}_1, \dots, \hat{B}_d$.

    \item \textbf{Calculate Residual Estimate:}
        Combine the estimated total norm $\hat{B} \approx h_f(1)$ and the estimated low-degree coefficients $\hat{B}_0, \hat{B}_1$ obtained from the polynomial fit to compute the final estimate of the residual error: $\hat{B}_{\ge 2} = \hat{B} - \hat{B}_0 - \hat{B}_1$.
\end{enumerate}

This approach allows estimating the quality of the best linear fit using a number of samples that is proportional to $1/\varepsilon^3$ for a desired approximation error $\varepsilon$ but is \textit{independent} of the dimension $N$. 

\newpage
\section{Our Full Protocol}\label{sec:full_protocol}

We present our protocol for PAC-verification of empirical influence (\thmref{thm:main}), detailed in Algorithm \ref{alg:pac_protocol}.

\renewcommand{\algorithmiccomment}[1]{\textcolor{gray}{\textit{\hfill $\star$ #1}}} 
\begin{algorithm}[H]
\footnotesize
\caption{Interactive PAC-Verification Protocol for Empirical Influence ($\Phi$)}
\label{alg:pac_protocol}
\begin{algorithmic}[1] 
\setlength{\itemsep}{1pt} 

\Statex \textbf{Shared Information:}
\Statex \quad Model Architecture, Training set $S \subseteq \cX$ ($|S| = N$), subsampling probability $p$,
\Statex \quad Training loss function $\mathcal{L}$, Stopping criteria for training.
\Statex \quad Model output function $f: \pmone^N \to \real$ derived from training.
\Statex \quad Equivalence check procedure $\Call{CheckEquiv}{\theta_1, \theta_2}$.

\Statex \textbf{Verifier ($\sV$) Input:}
\Statex \quad Approximation parameter $\varepsilon \in (0,1)$, Confidence parameter $\delta \in (0,1)$.

\Statex
\Statex \textbf{Round 1: Verifier ($\sV$) $\rightarrow$ Prover ($\sP$)}
\State Sample $M \gets \{x^{(m)}\}_{m=1}^{O(1/\varepsilon^2 \cdot \log(1/\delta))}$ where each $x^{(m)} \sim \cB_p$. \Comment{Subsets for final MSE check} \label{line:sample_M}
\State Choose $\rho \gets \Theta(\sqrt{\varepsilon})$. Sample $n = O(1/\varepsilon^3 \cdot \log(1/\delta))$ pairs $\{(x^{(e)}, x'^{(e)}) \}_{e=1}^{n}$ where $x^{(e)} \sim \cB_p$ and $x'^{(e)}$ is $\rho$-correlated to $x^{(e)}$. Let $E$ be the set of all unique subsets appearing in these $n$ pairs (at most $2n$ subsets). \label{line:sample_E} \Comment{Subsets for residual estimation}

\State Choose random seeds $R \gets \{r_e\}_{x_e \in E}$ for training models on subsets in $E$. \label{line:choose_seeds}
\State Set spot-check set size $k \gets O(1/\varepsilon^2 \cdot \log(1/\delta))$. Sample spot-check set $C \subseteq E$ of size $k$ uniformly at random. \label{line:choose_spotcheck} \Comment{$k$ will ensure high prob. detection}
\State Send $(E, R)$ to $\sP$. \Comment{Seeds $R$ are indexed corresponding to subsets in $E$}

\Statex
\Statex \textbf{Round 2: Prover ($\sP$) $\rightarrow$ Verifier ($\sV$)}
\State Compute attribution $\mathbf{a}^{\star} \gets \Phi(S)$. \Comment{Computationally expensive}
\State \textbf{For} each $x_e \in E$ \textbf{do}
\State \quad Train model with seed $r_e$ on subset $x_e$ to get weights $\theta_e$.
\State \textbf{End For}
\State Let $\boldsymbol{\theta} = \{\theta_e\}_{x_e \in E}$.
\State Send $(\boldsymbol{\theta}, \mathbf{a}^{\star})$ to $\sV$. \Comment{Sends all trained models and the computed attribution}

\Statex
\Statex \textbf{Round 3: Verifier ($\sV$) Verification and Output}
\State \textbf{Spot-check:} \label{line:spot_check_procedure}
\State \textbf{For} each $x_c \in C$ \textbf{do} \label{line:spot_check_start}
\State \quad Train model with seed $r_c$ on subset $x_c$ to get local weights $\hat{\theta}_c$. \Comment{$\sV$ performs limited training}
\State \quad Let $\theta'_c$ be the model weights received from a (possibly dishonest) Prover $\sP'$ for subset $x_c$.
\State \quad \textbf{If not} \Call{CheckEquiv}{$\hat{\theta}_c, \theta'_c$}) \textbf{then} \label{line:spot_check_compare} \Comment{Check if Prover's result matches Verifier's}
\State \quad\quad \textbf{Output:} $\mathrm{abort}$ and \textbf{Terminate}.
\State \quad \textbf{End If}
\State \textbf{End For} \label{line:spot_check_end}
\State \Comment{All spot-checks passed}
\State Use received models $\boldsymbol{\theta'}$ to define the function values $f'(x_e)$ for $x_e \in E$. \label{line:use_Prover_models}
\State Compute residual estimate $\hat{B}_{\ge 2} \gets \Call{ResidualEstimation}{f'} \text{ on } E, \rho, \text{budget } |E|$. 
\State \Comment{Uses function values derived from $\sP$'s models (for $x_e \notin C$)}
\State \textbf{For} each $x_m \in M$ \textbf{do} \label{line:verifier_train_M_start_formal}
\State \quad Train model with fresh random seed $r'_m$ on subset $x_m$ to get weights $\theta_m$.
\State \quad Evaluate $f(x_m)$ using $\theta_m$. \label{line:verifier_train_M_end_formal} \Comment{$\sV$ trains models for MSE estimate}
\State \textbf{End For}
\State Let $g(x) = \langle \mathbf{a}^{\star}, x \rangle$. Estimate $\widehat{\mathrm{mse}}_p(f, g) \gets \frac{1}{|M|} \sum_{x_m \in M} (f(x_m) - g(x_m))^2$. \label{line:estimate_mse_formal}
\State \textbf{If} $\widehat{\mathrm{mse}}_p(f, g) > \hat{B}_{\ge 2} + \varepsilon/2$ \textbf{then} \label{line:final_check_condition_formal}
\State \quad \textbf{Output:} $\mathrm{abort}$ and \textbf{Terminate}.
\State \textbf{Else}
\State \quad \textbf{Output:} $\mathbf{a}^{\star}$ \Comment{Accept the Prover's attribution}
\State \textbf{End If}

\end{algorithmic}
\end{algorithm}
\normalsize

\textbf{The Equivalence Checking Procedure.}
Note that we assume access to an equivalence-checking procedure between model training runs. The exact implementation of $\Call{CheckEquiv}{}$ depends on the training setup. Ideally, even with fully deterministic training (with a fixed architecture, data subset $x_c$, and randomness seed $r_c$), this check could be a direct comparison of model weights ($\theta_c = \theta'_c$). However, practical ML training can exhibit non-determinism (e.g., due to GPU parallelism). In such cases, $\Call{CheckEquiv}{}$ might involve comparing model outputs on a held-out test set, comparing final loss values within a tolerance, or potentially using cryptographic commitments or hashes if sufficient determinism can be enforced.

\section{Proof of Main Theorem}\label{sec:main_thm_proof}

In this section, we will prove that the protocol outlined in Algorithm 1 works to witness Theorem 1. For now, let us start with an outline of our proof.

\textbf{Completeness.} To argue completeness, we start with the observation that if the Prover is honest, then the ``spot-checks'' used in the protocol will pass. In this case, the Verifier is able to accurately estimate both the optimal linear prediction error ($B_{\ge 2}$) using the residual estimation algorithm, and the actual error of the Prover's candidate attribution scores using local training runs. 
    
Since the honest Prover submits the optimal, or near optimal, scores $\mathbf{a}^\star = \Phi(S)$, these two errors are close, causing the final consistency check to pass and the Verifier to accept the correct attribution.
    
\textbf{Soundness.} To prove soundness, we will consider different cases. 
First, where a dishonest Prover lies about ``many'' of the requested model trainings ($\boldsymbol{\theta'}$), and second, where the dishonest Prover lies on ``a few'' of the requested model trainings.
    
In case one, we show that the Prover will be caught with high probability by the Verifier's random spot-checks. In this case, the Verifier outputs ``abort'' with high probability.

In case two, we will show that the Verifier's spot checks may all pass, but that despite this, the residual estimation will still be good. That is, we show that it is robust to some adversarial corruptions. We prove a standalone lemma to accomplish this.

As a result, in this case, even if the Prover has submitted incorrect candidate attribution scores $\mathbf{a}'$, then the Verifier's local estimate of the high error $\mathrm{mse}^{(p)}(f, \langle \mathbf{a}', \cdot \rangle)$ will then significantly exceed the estimated optimal error $\hat{B}_{\ge 2}$, causing the final consistency check to fail and the Verifier to abort.
    
\textbf{Efficiency.} To show efficiency, we will directly estimate the expected number of model trainings done by the Verifier.

\subsection{Supporting Lemma: Robust Residual Estimation}
\label{sec:robust_residual_estimation_section}

Before proving the main theorem, we establish the robustness of the \textsc{ResidualEstimation} subroutine against a limited number of adversarial corruptions.

\begin{lemma}[Robust Residual Estimation]
\label{lem:robust_residual_estimation}
Let $f: \pmone^N \to [-b,b]$ be bounded. Let $\hat{B}_{\ge 2}$ be the output of $\textsc{ResidualEstimation}(f, \rho, n)$ using degree $d=2$, points $[0, \rho, 2\rho]$ (and $\rho=1$), and based on $n$ evaluations of $f$ run on the same sample points, but where an adversary corrupts up to $m = O(1/\varepsilon)$ of the $n$ function evaluations, by replacing $f(x)$ with $f'(x) \in [-b,b]$. Let $n = O(1/\varepsilon^3 \log(1/\delta))$, and $\rho = \Theta(\sqrt{\varepsilon})$.

Then, with probability at least $1-\delta/4$ (over the internal randomness of the algorithm):
\[ |\hat{B}_{\ge 2} - B_{\ge 2}| = \tilde{O}(b^2 \varepsilon) \]
where the constant in $\tilde{O}(\cdot)$ depends on $\delta$ but not on $n, m, \varepsilon, N$.
\end{lemma}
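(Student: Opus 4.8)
The plan is to trace through the three-step structure of the \textsc{ResidualEstimation} algorithm described in Appendix~\ref{apx:residual_estimation_algo_overview} — (i) estimate noise stabilities $h_f(\rho_j)$ at the points $\{0,\rho,2\rho,1\}$ by empirical averages of $f(x)f(x')$ over $\rho_j$-correlated pairs, (ii) fit a degree-$2$ polynomial $P(\rho)=\hat B_0+\hat B_1\rho+\hat B_2\rho^2$ by non-negative least squares, (iii) output $\hat B_{\ge 2}=\hat B-\hat B_0-\hat B_1$ — and show that at each step the effect of $m=O(1/\varepsilon)$ corrupted evaluations is dominated by the statistical error already present in the clean analysis of Theorem~\ref{thm:residual_estimation_guarantee}, which is $\tilde O(\varepsilon)$ (after the $\rho=\Theta(\sqrt\varepsilon)$ scaling). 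So the corruption is treated as an additive perturbation on top of the existing bound, and the whole point is that $m$ is small enough that this perturbation is absorbed.

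\textbf{Step 1 (perturbation of the noise-stability estimates).} Each $h_f(\rho_j)$ is estimated as $\frac1{n_j}\sum f(x)f(x')$ over the $n_j = \Theta(n)$ pairs allocated to level $\rho_j$. Since $|f|\le b$, replacing $f(x)$ by $f'(x)\in[-b,b]$ on a single evaluation changes a term $f(x)f(x')$ by at most $2b^2$; a corrupted $x$ can appear in at most (a constant number of, by the pairing structure) product terms at each level, so $m$ corruptions shift each $\hat y_j$ by at most $O(mb^2/n_j)=O\!\big(b^2 m/n\big)$. With $n=\Theta(\varepsilon^{-3}\log(1/\delta))$ and $m=O(\varepsilon^{-1})$ this is $O(b^2\varepsilon^{2}/\log(1/\delta))$, which is even smaller than the $\tilde O(b^2\varepsilon)$ target. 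The clean concentration bound (Hoeffding over bounded products, as in \citet{saunshi2022understanding}) gives that $|\hat y_j - h_f(\rho_j)|$ and $|\hat B - h_f(1)|$ are each $\tilde O(b^2\varepsilon)$ with probability $\ge 1-\delta/4$; add the corruption shift to conclude the corrupted estimates still satisfy $|\tilde y_j - h_f(\rho_j)| = \tilde O(b^2\varepsilon)$ and likewise for $\tilde B$.

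\textbf{Step 2 (stability of the polynomial fit and the output).} Here I would reuse exactly the linear-algebra / conditioning argument from the proof of Theorem~\ref{thm:residual_estimation_guarantee}: the map from the (noisy) stability values at $\{0,\rho,2\rho\}$ to the fitted coefficients $(\hat B_0,\hat B_1,\hat B_2)$ is given by a fixed matrix whose operator norm, after the $\rho=\Theta(\sqrt\varepsilon)$ rescaling, amplifies errors by a factor $\mathrm{poly}(1/\rho)=\mathrm{poly}(1/\varepsilon)$; \citet{saunshi2022understanding} already balance this amplification against the per-point error $\tilde O(\varepsilon^{3/2})$ (they take $n=\Theta(1/\varepsilon^3)$ precisely so that the post-amplification error of $\hat B_0,\hat B_1$ is $\tilde O(\varepsilon)$). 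Since our corrupted stability estimates have errors of the same order $\tilde O(b^2\varepsilon^{3/2})$ as in the clean case (Step 1 showed the extra term is lower order), the identical argument yields $|\hat B_0 - B_0|,|\hat B_1-B_1|=\tilde O(b^2\varepsilon)$, and combined with $|\tilde B - B| = \tilde O(b^2\varepsilon)$ and the identity $B_{\ge2}=B-B_0-B_1$ (equation from Appendix~\ref{apx:residual_estimation_algo_overview}) plus a triangle inequality, $|\hat B_{\ge2}-B_{\ge2}|=\tilde O(b^2\varepsilon)$. (The non-negativity constraint in the least squares only helps: projecting onto a convex set is $1$-Lipschitz, so it cannot increase the error.)

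\textbf{The main obstacle} is Step~2, and more specifically making the "reuse the conditioning argument" move rigorous without the paper having reproduced that argument — i.e., arguing that the perturbation analysis is genuinely a black-box consequence of a Lipschitz bound on $(\tilde y_j,\tilde B)\mapsto\hat B_{\ge2}$ whose constant depends only on the fixed evaluation points $\{0,\rho,2\rho,1\}$ (hence on $\varepsilon$, not on $n,m,N$). Concretely I need: (a) the per-pair products are bounded by $b^2$ so Hoeffding applies with the right variance proxy; (b) the corruption bound $O(b^2 m/n)$ from Step~1 is genuinely $o(b^2\varepsilon^{3/2})$, which pins down the admissible $m=O(1/\varepsilon)$ (indeed any $m=o(\varepsilon^{-3/2}/\sqrt{\log(1/\delta)})$ would do, so $O(1/\varepsilon)$ is comfortably safe); and (c) tracking that all the $\tilde O(\cdot)$ constants fold the $\log(1/\delta)$ dependence but nothing else, so the final bound reads $\tilde O(b^2\varepsilon)$ with a $\delta$-dependent, $(n,m,\varepsilon,N)$-independent constant, and the whole event holds with probability $\ge 1-\delta/4$ as claimed.
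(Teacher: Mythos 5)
Your proposal follows essentially the same route as the paper: decompose the total error into an adversarial-corruption term and a statistical-sampling term, bound the corruption-induced shift of each noise-stability estimate by $O(m b^2/n) = O(b^2\varepsilon^2)$, observe this is dominated by the $\tilde O(b^2\varepsilon^{3/2})$ clean concentration error, and then invoke the conditioning/perturbation analysis of \citet{saunshi2022understanding} (Theorem~3.2) to propagate through the constrained least-squares fit with the $1/\rho$ amplification, arriving at $\tilde O(b^2\varepsilon)$ after setting $\rho = \Theta(\sqrt{\varepsilon})$. One minor inconsistency to fix: in your Step~1 you assert the clean concentration bound on $|\hat y_j - h_f(\rho_j)|$ is $\tilde O(b^2\varepsilon)$, but the Hoeffding bound with $n = \Theta(\varepsilon^{-3}\log(1/\delta))$ gives $\tilde O(b^2\varepsilon^{3/2})$ — the tighter order is actually needed (and is what you correctly use in Step~2), since after the $1/\rho = 1/\sqrt\varepsilon$ amplification only $\varepsilon^{3/2}$ yields the claimed final $\tilde O(b^2\varepsilon)$.
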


\begin{proof}[Proof of Lemma~\ref{lem:robust_residual_estimation}]
Let us first recap the $\textsc{ResidualEstimation}$ algorithm.

$\textsc{ResidualEstimation}$ uses $n$ total function evaluations, distributed among estimating $h_f(0)$, $h_f(\rho)$, $h_f(2\rho)$ and $h_f(1)$.
Let $n_0, n_\rho, n_{2\rho}$ be the number of pairs $(x, x')$ sampled to estimate $h_f(0), h_f(\rho), h_f(2\rho)$ respectively, and let $N_1$ be the number of samples $x$ used to estimate $h_f(1) = \ex{f(x)^2}$. 
The total number of function evaluations is $n = (n_0 + n_\rho + n_{2\rho}) \times 2 + N_1$. 

Let $\tilde{y}_0, \tilde{y}_\rho, \tilde{y}_{2\rho}$ be the empirical estimates of $h_f(0), h_f(\rho), h_f(2\rho)$ obtained using the original function $f$ evaluations from the $n$ samples. Let $\tilde{B}$ be the estimate of $h_f(1) = \ex{f(x)^2}$. Let $\tilde{\mathbf{y}} = [\tilde{y}_0, \tilde{y}_\rho, \tilde{y}_{2\rho}]^T$.
The algorithm $\textsc{ResidualEstimation}$ uses these estimates, which are not corrupted by any adversary, to compute the uncorrupted residual estimate $\tilde{B}_{\ge 2}$. Specifically, a vector $\tilde{\mathbf{z}} = [\tilde{B}_0, \tilde{B}_1, \tilde{B}_2]^T$ is found by solving the constrained least squares problem 
\begin{equation}
    \arg\min_{\mathbf{z} \ge 0} \|\mathbf{A} \mathbf{z} - \tilde{\mathbf{y}}\|_2^2
\end{equation} 
Here, 
\begin{equation}
    \mathbf{A} = \begin{pmatrix} 1 & 0 & 0\\ 1 & \rho & \rho^{2}\\ 1 & 2\rho & 4\rho^{2} \end{pmatrix}
\end{equation}

Then $\tilde{B}_{\ge 2} = \tilde{B} - \tilde{z}_1 - \tilde{z}_2$ ($\tilde{z}_1$ would correspond to $B_0$, $\tilde{z}_2$ would correspond to $B_1$).

Now, we consider the setting where an adversary corrupts $m = O(1/\varepsilon)$ of the $n$ function evaluations. Let $f'(x)$ denote the value used in the computation after potential corruption. Since $f: \pmone^N \to [-b,b]$, we also have $f'(x) \in [-b,b]$.
Let $\hat{y}_0, \hat{y}_\rho, \hat{y}_{2\rho}$ be the estimates using the potentially corrupted values $f'$, and let $\hat{B}$ be the estimate of $\ex{f'(x)^2}$. Let $\hat{\mathbf{y}} = [\hat{y}_0, \hat{y}_\rho, \hat{y}_{2\rho}]^T$.
Let:
\begin{equation}
    \hat{\mathbf{z}} = [\hat{B}_0, \hat{B}_1, \hat{B}_2]^T = \arg\min_{\mathbf{z} \ge 0} \|\mathbf{A} \mathbf{z} - \hat{\mathbf{y}}\|_2^2
\end{equation}
and $\hat{B}_{\ge 2} = \hat{B} - \hat{z}_1 - \hat{z}_2$.

Our goal is to bound $|\hat{B}_{\ge 2} - B_{\ge 2}|$, where $B_{\ge 2}$ is the ``true value'' (assume exact estimation). We use the triangle inequality:
\[ |\hat{B}_{\ge 2} - B_{\ge 2}| \le |\hat{B} - B| + |\hat{z}_1 - z_1| + |\hat{z}_2 - z_2| \]
Here $z_1, z_2$ are the ``true coefficients'' of the noise stability polynomial (recall equation \ref{eq:noise_stability_poly}).

To do this, we will apply the triangle inequality again, and bound each of the terms on the RHS above, by $|\hat{B} - B| \le |\hat{B} - \tilde{B}| + |\tilde{B} - B|$, and similarly, $|\hat{z}_i - z_i| \le |\hat{z}_i - \tilde{z}_i| + |\tilde{z}_i - z_i|$.

First, bound the difference in the estimates $\hat{y}_i - \tilde{y}_i$ and $\hat{B} - \tilde{B}$.
Consider $\hat{y}_\rho - \tilde{y}_\rho$. This is the difference between the average of $f'(x)f'(x')$ and $f(x)f(x')$ over $n_\rho = \Theta(n)$ pairs. An adversary corrupts $m$ function evaluations total. Each term $f(x)f(x')$ uses two evaluations. At most $m$ terms in the average can be affected by corruption.
Let $S_\rho \subseteq [n_\rho]$ be the indices of the affected terms. $|S_\rho| \le m$.
\[ \hat{y}_\rho - \tilde{y}_\rho = \frac{1}{n_\rho} \sum_{j \in S_\rho} (f'(x_j)f'(x'_j) - f(x_j)f(x'_j)) \]
Since $|f(x)f(x')| \le b^2$ and $|f'(x)f'(x')| \le b^2$, the difference in each term is bounded by $2b^2$.
\[ |\hat{y}_\rho - \tilde{y}_\rho| \le \frac{1}{n_\rho} \sum_{j \in S_\rho} 2b^2 \le \frac{m \cdot 2b^2}{n_\rho} \]
Similarly, 
\begin{equation}
|\hat{y}_0 - \tilde{y}_0| \le \frac{m \cdot 2b^2}{n_0}
\end{equation}
and 
\begin{equation}
    |\hat{y}_{2\rho} - \tilde{y}_{2\rho}| \le \frac{m \cdot 2b^2}{n_{2\rho}}
\end{equation}
Also, 
\begin{equation}
    |\hat{B} - \tilde{B}| = |\frac{1}{N_1} \sum_{j \in S_1} (f'(x_j)^2 - f(x_j)^2)| \le \frac{m \cdot 2b^2}{N_1}
\end{equation}
Then the maximum element-wise error is bounded:
\[ \|\hat{\mathbf{y}} - \tilde{\mathbf{y}}\|_\infty \le \max\left(\frac{2m b^2}{n_0}, \frac{2m b^2}{n_\rho}, \frac{2m b^2}{n_{2\rho}}\right) = O\left(\frac{m b^2}{n}\right) \]
Since $n = O(1/\varepsilon^3 \log(1/\delta))$ and $m = O(1/\varepsilon)$, we have
\[ \|\hat{\mathbf{y}} - \tilde{\mathbf{y}}\|_\infty= O\left(\frac{(1/\varepsilon) b^2}{1/\varepsilon^3}\right) = O(b^2 \varepsilon^2) \]

Similarly, $|\hat{B} - \tilde{B}| = O(b^2 \varepsilon^2)$.

Now, we can bound the difference between the uncorrupted estimates $\tilde{y}_0, \tilde{y}_\rho, \tilde{y}_{2\rho}$ and the ``true values'' $h_f(0), h_f(\rho), h_f(2\rho)$ using standard concentration inequalities. For instance, see Lemma A.1 of \citet{saunshi2022understanding}, which shows that the estimations are close enough to the true values with probability $1 - \delta$, i.e. $\tilde{y}_\rho = \tilde{h}(\rho) = h(\rho) + \gamma_{\rho}$ where $|\gamma_{\rho}| \le \gamma$ where $\gamma = O(\sqrt{\log(1/\delta)/n})$.

At this point, we can combine the adversarial error bound and estimation error bounds for the noise sensitivity estimates, and complete the analysis as do \citet{saunshi2022understanding} in their analysis of $\textsc{ResidualEstimation}$.

From above, we have, with high probability,
\begin{equation*}
\|\hat{\mathbf{y}} - {\mathbf{y}}\|_\infty \le
\|\hat{\mathbf{y}} - \tilde{\mathbf{y}}\|_\infty + \|\tilde{\mathbf{y}} - {\mathbf{y}}\|_\infty \le O(b^2 \varepsilon^2) + O(\varepsilon^{3/2}) \le O(b^2\varepsilon^{3/2})
\end{equation*}

Then, applying the analysis from \citet{saunshi2022understanding} (Theorem 3.2), of how the perturbed estimates affect the constrained least squares solutions, we get

\begin{equation*}
    |\hat{B} - B| \text{ and } |\hat{z}_{1} - z_{1}|\text{ and }
    |\hat{z}_{2} - z_{2}| \le
    O\left(\frac{b^2\varepsilon^{3/2}}{\rho} + B_{\ge3}\rho^{2}\right)
\end{equation*}
Hence, 
\begin{equation*}
    |\hat{B}_{\ge 2} - B_{\ge 2}| \le O\left(\frac{b^2\varepsilon^{3/2}}{\rho} + B_{\ge3}\rho^{2}\right)
\end{equation*}
Finally, for the setting of $\rho = \sqrt{\varepsilon}$, and observing that $B_{\ge3} \le b^2$ we obtain the desired expression.

\begin{equation*}
    |\hat{B}_{\ge 2} - B_{\ge 2}| \le O(b^2\varepsilon)
\end{equation*}

\end{proof}

\subsection{Proof of Main Theorem}
\label{sec:proof_main_theorem_detailed} 

\begin{proof}[Proof of Theorem~\ref{thm:main}]
We prove \thmref{thm:main} by establishing Completeness, Soundness, and Efficiency as per \defref{def:pac-verif}. Recall that we assume $|f(x)| \le b = O(1)$. We set internal protocol confidence parameters for subroutines to $\delta' = \delta/4$.

\subsection{Completeness}
For completeness we can assume both $\sV$ and $\sP$ follow the protocol.
\begin{enumerate}
    \item \textbf{Prover's Output:} Honest $\sP$ computes $\mathbf{a}^{\star} = \Phi(S)$ and correct models $\boldsymbol{\theta}$. We have $\err{\mathbf{a}^{\star}}{\Phi(S)} = 0 \le \varepsilon$. $\sP$ sends $(\boldsymbol{\theta}, \mathbf{a}^{\star})$.
    \item \textbf{Verifier's Spot-Check:} Passes with probability 1, assuming deterministic training given fixed random seeds.
    \item \textbf{Verifier's Consistency Check:}
        \begin{itemize}
            \item[(a)] \textit{Residual Estimation:} $\sV$ uses correct $\boldsymbol{\theta}$ for $f$ values. By \thmref{thm:residual_estimation_guarantee}, using $|E| = O(1/\varepsilon^3 \log(1/\delta'))$, $|\hat{B}_{\ge 2} - B_{\ge 2}| < \varepsilon/4$ with probability at least $ 1-\delta'$.
            \item[(b)] \textit{MSE Estimation:} $\sV$ uses local trainings for $M$. With $|M| = O(1/\varepsilon^2 \log(1/\delta'))$, by Hoeffding's inequality, $|\widehat{\mathrm{mse}}_p(f, \langle \mathbf{a}^{\star}, \cdot \rangle) - B_{\ge 2}| < \varepsilon/4$ with probability at least $ 1-\delta'$.
            \item[(c)] \textit{Final Check:} Compare $\widehat{\mathrm{mse}}_p(f, \langle \mathbf{a}^{\star}, \cdot \rangle)$ with $\hat{B}_{\ge 2} + \varepsilon/2$. With probability at least $1-2\delta'$, both estimates are within $\varepsilon/4$ of $B_{\ge 2}$. Then $\widehat{\mathrm{mse}}_p \le B_{\ge 2} + \varepsilon/4$ and $\hat{B}_{\ge 2} + \varepsilon/2 \ge (B_{\ge 2} - \varepsilon/4) + \varepsilon/2 = B_{\ge 2} + \varepsilon/4$. So, $\widehat{\mathrm{mse}}_p \le \hat{B}_{\ge 2} + \varepsilon/2$. Therefore the final check passes.
        \end{itemize}
    \item \textbf{Verifier's Output:} $\sV$ outputs $\mathbf{a}^{\star}$ with $\err{\mathbf{a}^{\star}}{\Phi(S)} = 0 \le \varepsilon$.
\end{enumerate}
The overall success probability is at least $1 - 2\delta' = 1 - \delta/2 \ge 1-\delta$. Therefore completeness is proved.

\subsection{Soundness}

To analyze soundness, we will proceed by case analysis. The first case deals with a scenario where the Prover is \textit{very} dishonest, in the sense that many of the requested model trainings from the set $E$ (see Round 1 of the protocol) are wrong.

The second case will be mutually exclusive, and deal with the scenario where the Prover is \textit{mildly} dishonest, in the sense that \textit{not too many} of the requested model trainings are wrong. In this second case, the residual estimation robustness lemma (lemma \ref{lem:robust_residual_estimation}) will be useful.

To begin, recall that the dishonest Prover $\sP'$ is sending $(\boldsymbol{\theta}', \mathbf{a}')$. Let $W$ be the set of corrupted challenges where $\theta'_e$ is not equivalent to a correctly trained model, and let the number of corruptions be $m = |W|$. Let $|E|$ be the total number of challenges sent to the Prover for residual estimation. We define the threshold for "many" corruptions by setting a critical point $m^{\star} = c/\varepsilon$ for a sufficiently large constant $c$. The number of spot-checks is $k = O(1/\varepsilon^2 \cdot \log(1/\delta))$.

\textbf{Case 1: $m > m^{\star}$.} In this case, we show that a spot-check will fail with high probability. The Verifier samples a set $C$ of $k$ challenges to check. The Prover is caught if $C \cap W \neq \emptyset$. The probability that the Prover is \textit{not} caught is the probability that all $k$ checks land outside of the corrupted set $W$. This probability can be upper-bounded by sampling with replacement:
\[ \Pr[\text{not caught}] \le \left(\frac{|E|-m}{|E|}\right)^k = \left(1 - \frac{m}{|E|}\right)^k \]
Since $m > m^{\star} = c/\varepsilon$, we have:
\[ \Pr[\text{not caught}] < \left(1 - \frac{c/\varepsilon}{|E|}\right)^k \le e^{-k \cdot (c/\varepsilon) / |E|} \]
Substituting $k = C'/\varepsilon^2 \cdot \log(4/\delta)$ and $|E| = C_E / \varepsilon^3 \cdot \log(1/\delta)$ for constants $C', C_E$, we get the exponent:
\[ - \frac{C' \log(4/\delta)}{\varepsilon^2} \frac{c/\varepsilon}{C_E \log(1/\delta)/\varepsilon^3} = - \frac{C' c}{C_E} \frac{\log(4/\delta)}{\log(1/\delta)} \]
By choosing the constant $c$ in the definition of $m^\star$ to be large enough, this exponent can be made smaller than $-\ln(4/\delta)$, ensuring the failure probability is less than $\delta/4$. Thus, we conclude $\sV$ aborts with probability at least $1-\delta/4$.

\textbf{Case 2: $m \le m^{\star}$.} In this second case, all spot-checks may pass with some probability. Let us (conservatively) assume that all checks indeed pass, and then consider how this affects the Verifier's output.

We analyze the protocol, step by step, in this case.
\begin{itemize}
    \item[(1)] \textit{Robust Residual Estimation:} $\sV$ runs $\textsc{ResidualEstimation}$ using the function values derived from the models $\boldsymbol{\theta}'$ sent by the Prover. Since $m \le m^\star = O(1/\varepsilon)$, the number of corruptions is within the required bound for Lemma~\ref{lem:robust_residual_estimation}. Therefore, the lemma applies directly. We have $|\hat{B}_{\ge 2} - B_{\ge 2}| < \varepsilon/4$ with probability at least $1-\delta'$. 
    
    \item[(2)] \textit{MSE Estimation:} $\sV$ estimates $\widehat{\mathrm{mse}}_p(f, g')$ for $g' = \langle \mathbf{a}', \cdot \rangle$. This happens with local samples in the set $M$. Therefore, we can conclude that $|\widehat{\mathrm{mse}}_p(f, g') - \mathrm{mse}^{(p)}(f, g')| < \varepsilon/4$ with probability at least $1-\delta'$, by applying standard Chernoff bounds.
    \item[(3)] \textit{Final Check Condition:} Assume $\err{\mathbf{a}'}{\Phi(S)} > \varepsilon$, which means $\mathrm{mse}^{(p)}(f, g') > B_{\ge 2} + \varepsilon$.
    with probability at least $ 1-2\delta'$, we have $\widehat{\mathrm{mse}}_p(f, g') > \mathrm{mse}^{(p)}(f, g') - \varepsilon/4 > (B_{\ge 2} + \varepsilon) - \varepsilon/4 = B_{\ge 2} + 3\varepsilon/4$.
    Also, $\hat{B}_{\ge 2} < B_{\ge 2} + \varepsilon/4$.
    Then $\hat{B}_{\ge 2} + \varepsilon/2 < (B_{\ge 2} + \varepsilon/4) + \varepsilon/2 = B_{\ge 2} + 3\varepsilon/4$.
    Since $\widehat{\mathrm{mse}}_p(f, g') > B_{\ge 2} + 3\varepsilon/4$ and $B_{\ge 2} + 3\varepsilon/4 > \hat{B}_{\ge 2} + \varepsilon/2 $, the condition $\widehat{\mathrm{mse}}_p(f, g') > \hat{B}_{\ge 2} + \varepsilon/2$ holds.
    \item[(4)] \textit{Verifier Output:} If $\err{\mathbf{a}'}{\Phi(S)} > \varepsilon$ and $m \le m^{\star}$, $\sV$ outputs $\mathrm{abort}$ with probability at least $ 1-2\delta'$.
\end{itemize}

\textbf{Combining Cases:} If $\err{\mathbf{a}'}{\Phi(S)} > \varepsilon$, the probability $\sV$ does \textbf{not} abort is at most $\Prob{\text{Case 1 fails}} + \Prob{\text{Case 2 fails}} \le \delta/4 + 2\delta' = \delta/4 + \delta/2 = 3\delta/4 \le \delta$. Therefore Soundness holds.

\subsection{Efficiency}
Finally, we consider efficiency. The Verifier's cost $\kappa(\varepsilon, \delta)$ is the number of model trainings.
\begin{enumerate}
    \item \textbf{Spot-Checking:} The number of spot-checks is deterministic.
    \[
    k = O(1/\varepsilon^2 \cdot \log(1/\delta))
    \]
    \item \textbf{MSE Estimation:} $|M| = O(1/\varepsilon^2 \cdot \log(1/\delta))$.
\end{enumerate}
Thus, the total cost is $O(1/\varepsilon^2 \cdot \log(1/\delta))$.
Treating $\delta$ as constant yields $\kappa(\varepsilon, O(1)) \in O(1/\varepsilon^2)$. 

\end{proof}

\section{Verifying Attributions for Multiple Test Points or Outputs}
\label{apx:multiple_outputs_discussion}

Our primary protocol (Algorithm \ref{alg:pac_protocol}) and its analysis (Theorem \ref{thm:main}) are presented for verifying attribution with respect to a single model output function $f$. However, a common and practical requirement is to obtain attributions for a model's behavior across multiple distinct scenarios, such as its predictions on $Z$ different test points, or changes in $Z$ different output logits. This gives rise to a set of $Z$ model output functions, $\{f_z\}_{z=1}^Z$, where each $f_z: \{\pm 1\}^N \to [-b,b]$. For each $f_z$, there is a corresponding empirical influence attribution $\Phi(S,z)$ and an optimal linear prediction residual $B_{\ge 2,z}$.

The goal is to extend our verification framework such that the Verifier $\sV$ can, with overall confidence $1-\delta$, simultaneously accept $Z$ attribution vectors $\{\mathbf{a}'_z\}_{z=1}^Z$ from the Prover $\sP$ if and only if $\mathrm{err}(\mathbf{a}'_z, \Phi(S,z)) \le \varepsilon$ for all $z \in [Z]$. This section details how our protocol can be adapted to achieve this, maintaining efficiency and the two-message structure.

The core strategy involves adjusting the internal confidence parameters of the statistical estimation subroutines using a union bound. If the overall desired failure probability for the entire set of $Z$ verifications is $\delta$, then the failure probability allocated to any critical estimation step concerning an individual function $f_z$ must be reduced to $\delta' \approx \delta / Z$. This tightening of per-instance confidence directly impacts the sample complexities, but only logarithmically in $Z$.

We formalize this extension with the following theorem:

\begin{theorem}[PAC-Verification for Multiple Output Functions]\label{thm:main_multi_output}
    Assume that for each $z \in [Z]$, the model output function $f_z: \pmone^N \rightarrow [-b,b]$ for some constant $b$.
    For any $\varepsilon \in (0,1)$, and $Z \ge 1$, there exists an $(\varepsilon, \delta)$-PAC verifier for the set of $Z$ empirical influence operators $\{\Phi(S,z)\}_{z=1}^Z$. The Verifier's expected cost function $\kappa_Z$ (number of local model trainings) satisfies $\kappa_Z \in O( (1/\varepsilon^2) \cdot \polylog(Z) \cdot \polylog(1/\delta) )$. The interactive protocol requires only two messages.
\end{theorem}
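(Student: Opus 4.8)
The plan is to run a \emph{single} execution of Algorithm~\ref{alg:pac_protocol}, reusing every sampled subset, every trained model, and every spot-check across all $Z$ output functions, and to pay for the resulting $Z$-fold union bounds by tightening the internal subroutine confidence to $\delta' = \Theta(\delta/Z)$. The observation that makes this cheap is that $f_1,\dots,f_Z$ are all read off of the \emph{same} trained model (e.g.\ one forward pass evaluated at $Z$ test points), so a single retraining on a subset $x$ simultaneously yields $f_1(x),\dots,f_Z(x)$. Hence the Verifier never performs more retrainings than in the single-function protocol; only the (cheap) readout step is repeated $Z$ times.

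Concretely, I would have $\sV$ in Round~1 sample $n = O(\varepsilon^{-3}\log(Z/\delta))$ $\rho$-correlated pairs ($\rho=\Theta(\sqrt{\varepsilon})$) with seeds, forming one challenge set $E$ shared by all $z$; one spot-check subset $C\subseteq E$ of size $k=O(\varepsilon^{-2}\log(Z/\delta))$; and one set $M$ of $O(\varepsilon^{-2}\log(Z/\delta))$ subsets from $\mathcal{B}_p$ for the MSE checks. In Round~2 the Prover returns the trained models $\boldsymbol{\theta}$ for $E$ together with all $Z$ candidate vectors $\{\mathbf{a}'_z\}_{z=1}^{Z}$. In Round~3, $\sV$ first runs the output-independent model-equivalence spot-check on $C$ and aborts on any mismatch; then, for each $z$, it runs $\textsc{ResidualEstimation}$ on $E$ using the readouts of $f_z$ from $\boldsymbol{\theta}$ to obtain $\hat B_{\ge 2,z}$, locally retrains on $M$ to estimate $\widehat{\mathrm{mse}}_z\approx\mathrm{mse}^{(p)}(f_z,\langle\mathbf{a}'_z,\cdot\rangle)$, and aborts if $\widehat{\mathrm{mse}}_z > \hat B_{\ge 2,z}+\varepsilon/2$; if nothing fires it outputs $\{\mathbf{a}'_z\}$.

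For \emph{completeness}, setting $\delta'=\delta/(8Z)$, the spot-checks pass with probability $1$, and for each $z$ both \thmref{thm:residual_estimation_guarantee} (with $|E|=O(\varepsilon^{-3}\log(1/\delta'))$) and Hoeffding (with $|M|=O(\varepsilon^{-2}\log(1/\delta'))$) give $\varepsilon/4$-accurate estimates of $B_{\ge 2,z}$ with probability $\ge 1-\delta'$; a union bound over these $2Z$ events leaves failure probability $\le 2Z\delta'=\delta/4$, after which each per-$z$ final check passes exactly as in the proof of \thmref{thm:main}, so $\sV$ accepts all $Z$ vectors, each with $\mathrm{err}=0\le\varepsilon$. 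For \emph{soundness}, suppose some $z^\star$ has $\mathrm{err}(\mathbf{a}'_{z^\star},\Phi(S,z^\star))>\varepsilon$; detecting this one index suffices, so no union over $z$ is needed. Let $m$ be the number of challenges whose model is not equivalent to an honest retraining; one corrupted model poisons the readouts $f_z(x_e)$ for \emph{all} $z$, yet the number of corrupted evaluations of $f_{z^\star}$ is still $O(m)$. If $m>m^\star=O(1/\varepsilon)$, the Case-1 argument of \thmref{thm:main} forces an aborting spot-check with probability $\ge 1-\delta/4$; if $m\le m^\star$, Lemma~\ref{lem:robust_residual_estimation} applied to $f_{z^\star}$ keeps $\hat B_{\ge 2,z^\star}$ within $\varepsilon/4$ of $B_{\ge 2,z^\star}$ and $\widehat{\mathrm{mse}}_{z^\star}$ within $\varepsilon/4$ (each with probability $\ge 1-\delta'$), so the $z^\star$ check fires as in \thmref{thm:main}. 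Hence $\sV$ fails to abort with probability at most $\delta/4+2\delta'\le\delta$. For \emph{efficiency}, $\sV$ retrains only for the $k$ spot-checks and the $|M|$ MSE samples, and by the shared-readout observation each retraining is counted once irrespective of $Z$, giving $\kappa_Z=O(k+|M|)=O(\varepsilon^{-2}\log(Z/\delta))=O(\varepsilon^{-2}\cdot\polylog(Z)\cdot\polylog(1/\delta))$; the protocol remains two messages.

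The step I expect to be the main obstacle is the soundness case analysis under shared resources: I must argue carefully that a single corrupted model, which simultaneously damages all $Z$ readouts, still leaves the per-$z^\star$ corruption count at $O(1/\varepsilon)$ so that Lemma~\ref{lem:robust_residual_estimation} applies verbatim to $f_{z^\star}$, and that a single, output-independent spot-check set is enough to police corruptions that an adversary could aim at an arbitrary subset of the $f_z$'s (in particular, that the Case-1 detection constant $m^\star$ need not grow with $Z$ once $k$ and $|E|$ are scaled by the common factor $\log(Z/\delta)$). Once these points are pinned down, the remainder is the routine $\log Z$ inflation of the single-output parameters already carried out for \thmref{thm:main}.
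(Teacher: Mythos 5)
Your proposal follows the same route as the paper's brief proof sketch: run Algorithm~\ref{alg:pac_protocol} once with internal confidence tightened to $\delta' = \Theta(\delta/Z)$, absorbing the resulting union bound through the logarithmic dependence of every sample complexity on the confidence parameter. You make explicit two points the paper leaves implicit but which are in fact needed for the claimed $Z$-independent retraining cost to hold --- that a single retrained model supplies all $Z$ readouts $f_1(x),\dots,f_Z(x)$ so the $k$ spot-check and $|M|$ MSE retrainings are amortized across all $z$, and that soundness requires no union over $z$ since detecting any one bad $z^\star$ already triggers an abort --- so your argument is simply a more carefully filled-in version of the paper's.
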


\begin{proof}[Proof Sketch for Theorem \ref{thm:main_multi_output}]
The proof adapts the logic of Theorem \ref{thm:main} by incorporating the union bound across the $Z$ output functions. Let $\delta_0 = \delta / (4Z)$ be the target failure probability for individual statistical estimation steps (like residual estimation or MSE estimation for a single $f_z$) and for the spot-checking mechanism's success in detecting a certain level of fraud for any $f_z$. The result follows immediately from the logarithmic dependence of the sample complexities on $\frac{1}{\delta}$ in each step.
\end{proof}

\section{Why PAC-Verification and not Delegation of Computation?}\label{apx:Not_DoC}

As first mentioned in section \ref{sec:prelims}, one might initially consider employing general-purpose cryptographic protocols for delegation of computation or verifiable computation to ensure the Prover $\sP$ performed the expensive attribution computation correctly. However, such approaches are insufficient for the verification goal central to our work. First of all, computational overhead of the cryptographic machinery is typically massive, and in our setting, the Prover has already pushed the limits of their feasible computations. Second, cryptographic protocols typically do not give guarantees for the \textit{statistical quality} of the output. In particular, whether the resulting attribution scores $\mathbf{a}'$ are indeed $\varepsilon$-close to the optimal scores $\Phi(S)$.

For instance, if the Prover uses skewed, unrepresentative, or otherwise low-quality subsets for its internal estimation process (whether maliciously or inadvertently), a general-purpose delegation protocol might still verify the computation's execution correctly based on those flawed inputs, offering no protection against a poor-quality final result in terms of predictive accuracy. Note, the Verifier cannot supply the inputs due to computational constraints. A practical example of when the use of flawed inputs might occur is if the Prover itself obtains trained models from a public ledger or otherwise untrusted source.

Thus, when the Prover's computational steps are \textit{correct}, standard cryptographic verification procedures would \textit{not} necessarily identify bad attributions. In other words, it does not inherently assess whether the output meets a statistical benchmark defined relative to the underlying data generating process or the optimal achievable performance. The PAC-verification framework of \citet{goldwasser2021interactive} is designed precisely to handle verification of approximate correctness of the \textit{outcome} relative to a statistical ideal (here, minimizing MSE), rather than merely the integrity of the computational steps taken.

\end{document}